
\documentclass[10pt,twocolumn,letterpaper]{article}

\usepackage[pagenumbers]{cvpr} 








\DeclareMathOperator*{\argmin}{arg\,min}
\DeclareMathOperator*{\argmax}{arg\,max}


\usepackage{multirow}
\usepackage{tabularx}
\usepackage{bm}
\usepackage{amsthm}
\usepackage{amssymb}
\usepackage{algorithm}
\usepackage{algpseudocode}

\newcommand{\bred}[1]{{\color{red}\textbf{#1}}}
\newcommand{\bblue}[1]{{\color[HTML]{0070C0}\textbf{#1}}}
\newcommand{\white}[1]{{\color{orange}#1}}
\newcommand{\figvspace}{\vspace{0pt}}
\newcommand{\tabvspace}{\vspace{0pt}}

\newtheorem{proposition}{Proposition}[section]
\newtheorem{lemma}{lemma}[section]
\newtheorem{definition}{Definition}[section]

%
\definecolor{cvprblue}{rgb}{0.21,0.49,0.74}
\usepackage[pagebackref,breaklinks,colorlinks,allcolors=cvprblue]{hyperref}


\title{Event Stream Filtering via Probability Flux Estimation}

\author{Jinze Chen, Wei Zhai, Yang Cao, Bin Li, Zheng-Jun Zha\\
University of Science and Technology of China\\
{\tt\small chjz@mail.ustc.edu.cn, \{wzhai056, forrest, binli, zhazj\}@ustc.edu.cn}
}

\begin{document}
\maketitle
\begin{abstract}

Event cameras asynchronously capture brightness changes with microsecond latency, offering exceptional temporal precision but suffering from severe noise and signal inconsistencies. Unlike conventional signals, events carry state information through polarities and process information through inter-event time intervals. However, existing event filters often ignore the latter, producing outputs that are sparser than the raw input and limiting the reconstruction of continuous irradiance dynamics. We propose the Event Density Flow Filter (EDFilter), a framework that models event generation as threshold-crossing probability fluxes arising from the stochastic diffusion of irradiance trajectories. EDFilter performs nonparametric, kernel-based estimation of probability flux and reconstructs the continuous event density flow using an O(1) recursive solver, enabling real-time processing. The Rotary Event Dataset (RED), featuring microsecond-resolution ground-truth irradiance flow under controlled illumination is also presented for event quality evaluation. Experiments demonstrate that EDFilter achieves high-fidelity, physically interpretable event denoising and motion reconstruction.

\end{abstract}    
\section{Introduction}
\label{sec:intro}

Event cameras are bio-inspired vision sensors that asynchronously report brightness changes with microsecond latency \cite{lichtsteiner2008128,taverni2018front,finateu20205}. Unlike conventional frame-based sensors that sample at uniform time intervals, the irregular, threshold-triggered event measurements avoid redundant sampling and thus fit into visual motion perception \cite{gallego2018unifying,vidal2018ultimate,falanga2020dynamic,gallego2020event}. But this new paradigm also magnifies the internal noise of the circuit, as pointed out in \cite{hu2021v2e,ding2023mlb}.


By analyzing the event generation model \cite{lichtsteiner2008128}, we observe that an event stream conveys two complementary types of information about the underlying brightness evolution, as illustrated in \cref{fig:principle}: (1) the \textbf{state information}, encoded by the discrete jump in logarithmic irradiance at the event time $I_{t_i}-I_{t_{i-1}}=p_{i}C_{}$ and, (2) the \textbf{process information}, described by the inequality $\sup_{t\in[t_{i-1},t_i)}|I_t-I_{t_{i-1}}|< C$, which constrains the latent irradiance trajectory between successive events. Here $I_t$ denotes the logarithm scene irradiance, $t_{i}\in\mathbb{R}$ and $p_{i}\in\{-1,1\}$ the timestamp and polarity of the $i$-th event and $C$ the contrast threshold. 
Classical signal processing approaches model continuous state variables effectively, but they are inherently unsuitable for discontinuous, asynchronous event polarities and cannot directly impose the above inequality constraint on the latent irradiance path. Consequently, most existing event filters ignore the process information and treat event polarities as if they were continuous signals, often producing outputs that are even sparser than the raw input and ultimately limiting the reconstruction of continuous irradiance dynamics.

\begin{figure}
    \centering
    \includegraphics[width=\linewidth]{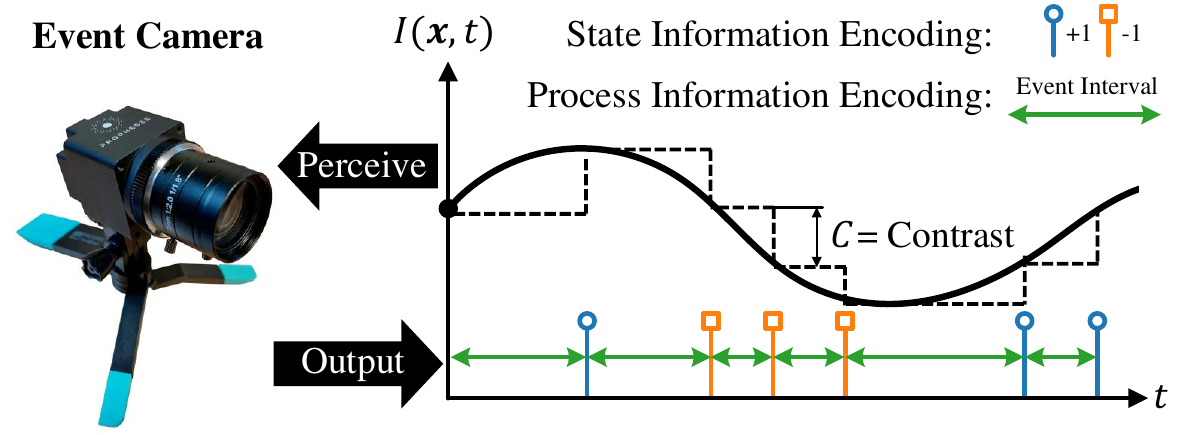}
    \caption{\textbf{Interpretation of the information encoded in an event stream.} Each event provides (1) state information through the discrete irradiance jump at the event time, and (2) process information by constraining the latent irradiance trajectory between events through the contrast-bound inequality.}
    \label{fig:principle}
    \figvspace
\end{figure}

\begin{figure*}
    \centering
    \begin{subfigure}{0.32\textwidth}
        \centering
        \includegraphics[width=.9\linewidth]{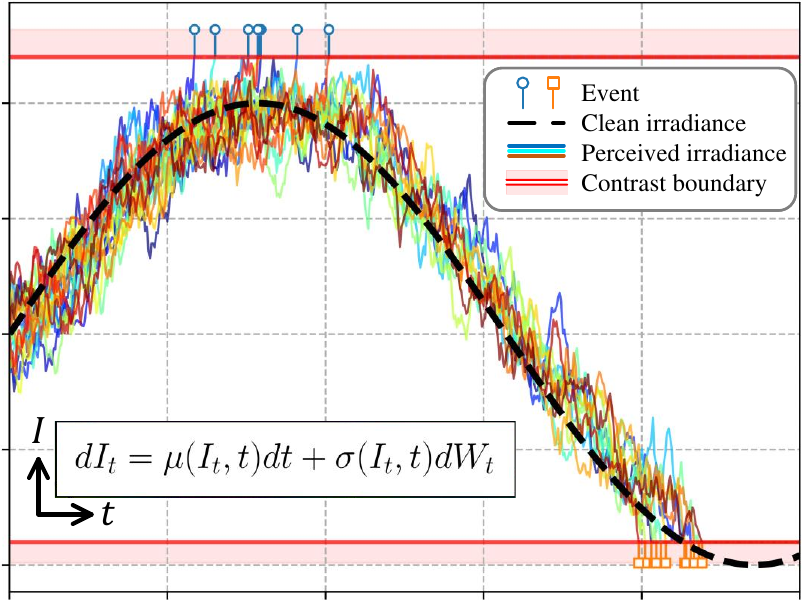}
        \caption{A stochastic specification of irradiance evolution, where clean irradiance and thermal noise determine the drift and diffusion of observed irradiance trajectories, and an event is triggered at the contrast boundary.}
    \end{subfigure}%
    \hfill
    \begin{subfigure}{0.34\textwidth}
        \centering
        \includegraphics[width=.9\linewidth]{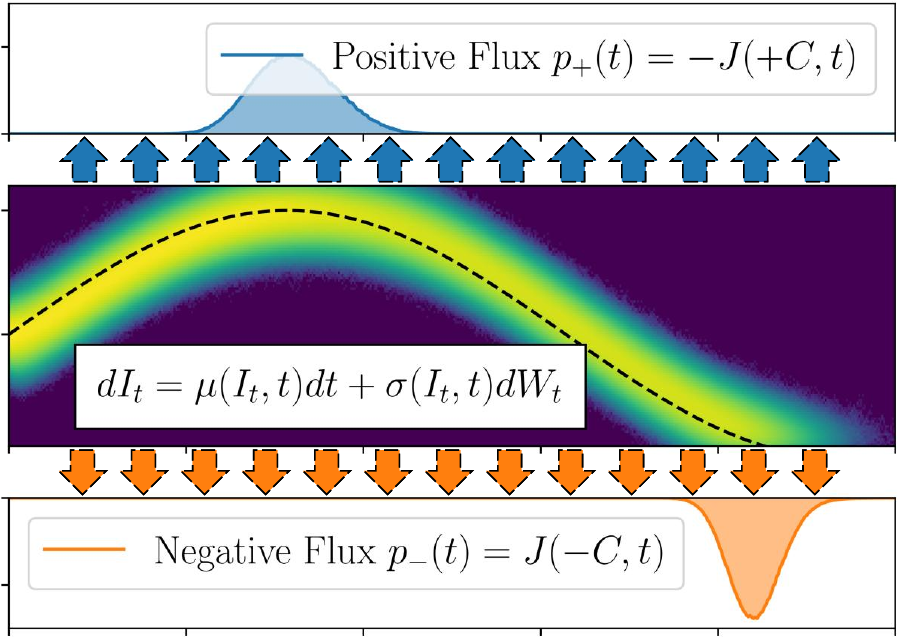}
        \caption{A distributional specification obtained by counting trajectory densities over irradiance–time space. The incoming probability flux at the contrast boundary corresponds to the instantaneous probability of generating an event.}
    \end{subfigure}
    \hfill
    \begin{subfigure}{0.32\textwidth}
        \centering
        \includegraphics[width=.9\linewidth]{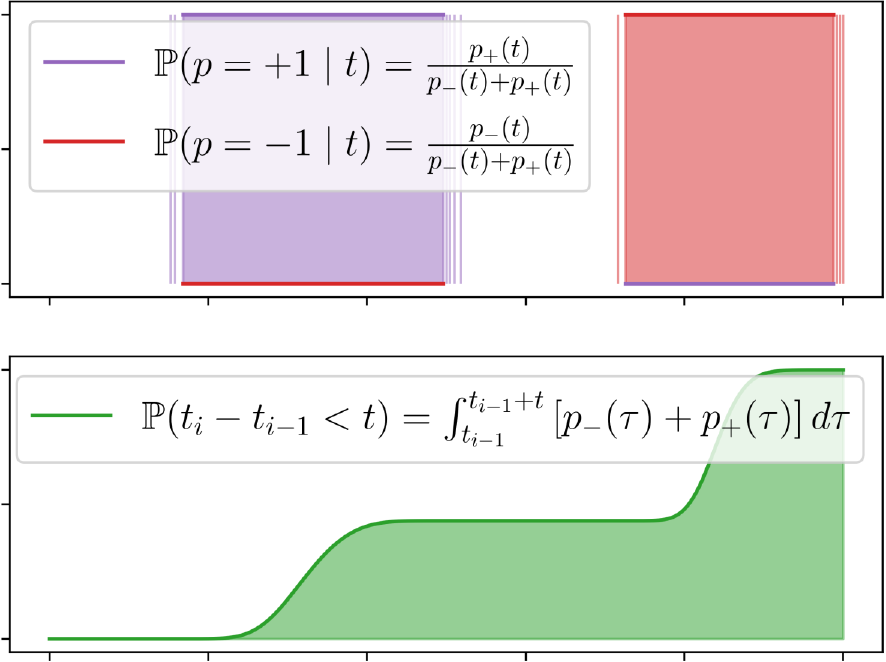}
        \caption{The distributions of discrete event polarities conditional on event time, and continuous inter-event times, obtained respectively from the ratio of polarity-specific boundary fluxes and their temporal integrals.}
    \end{subfigure}
    \caption{Stochastic analysis of event generation for a single pixel.}
    \label{fig:probflux}
    \figvspace
\end{figure*}

To bridge this gap, we revisit event generation from a stochastic process perspective. Intuitively, an event is triggered whenever the observed brightness trajectory moves far enough to cross the ON/OFF threshold. This trajectory is affected by thermal noise and exhibits randomness over time. The likelihood of this crossing is not determined by the past events themselves but how the brightness is flowing toward the threshold at any moment. This directional tendency of irradiance trajectories to cross the boundary is precisely what the probability flux (or current) represents.

Formally, although events appear as discontinuities, the irradiance evolves continuously and can be modeled as a threshold-crossing stochastic process. We find that both (1) the continuous distribution of \textbf{inter-event intervals} $\mathbb{P}(t_i-t_{i-1}<t)=\mathbb{P}(\sup_{\tau\in[t_{i-1},t_{i-1}+t)}|I_\tau-I_{t_{i-1}}|\geq C)$ and, (2) the discrete distribution of \textbf{event polarities} $\mathbb{P}(p_i=\pm1\mid t_i)=\mathbb{P}(I_{t_{i}}-I_{t_{i-1}}=\pm C~ \wedge~ \sup_{t\in[t_{i-1},t_i)}|I_t-I_{t_i}|<C)$ are determined by the threshold-crossing probability fluxes, as shown in \cref{fig:probflux}. Estimating this flux from discrete, noisy events provides a generative formulation of event filtering by resampling a clean, filtered event stream.

Building on this principle, we propose a generative event filtering framework termed Event Density Flow Filter (EDFilter) that estimates threshold-crossing probability fluxes in real time. The proposed EDFilter models the spatiotemporal density of events using a nonparametric kernel smoother optimized via recursive maximum likelihood estimation. This formulation leads to an $O(1)$ recursive solver, enabling online operation at sensor time scales.

We further propose the Rotary Event Dataset (RED), which provides microsecond-accurate ground-truth irradiance references under controlled motion and lighting conditions. Comprehensive experiments demonstrate that EDFilter achieves superior denoising accuracy and temporal fidelity compared with state-of-the-art methods, while maintaining theoretical interpretability and real-time efficiency.

In summary, the main contributions of this work are:

\begin{enumerate}
    \item The interpretation of event sensing as a threshold-crossing stochastic process is proposed, revealing that both inter-event intervals and event polarities are determined by probability fluxes at the contrast boundaries.
    \item A generative, probability-flux-based event filter EDFilter is proposed, which estimates boundary-crossing fluxes from discrete events using nonparametric, $O(1)$ recursive kernel smoothers, enabling real-time signal filtering.
    \item The RED dataset with precise microsecond ground truth irradiance references is proposed, which enables precise evaluation of temporal fidelity and supports research on physically accurate event sensing models.
    \item Extensive experiments on RED and existing datasets show that EDFilter achieves superior denoising accuracy and temporal precision.
\end{enumerate}

\section{Related Work}
\hspace*{\parindent}\textbf{Discrimitative Filters.} Most filters are discriminative. Density-based methods \cite{delbruck2008frame,liu2015design,khodamoradi2018n,feng2020event,zhang2023neuromorphic,guo2025ebf} pass only events with sufficient spatiotemporal neighbors. Motion-based methods \cite{mueggler2015lifetime,wang2019ev} fit local planes to estimate velocity and filter out events that are not motion-consistent. More recently, learning-based methods use MLP \cite{guo2022low}, CNN \cite{baldwin2020event}, or Transformer \cite{jiang2024edformer} on event patches to classify each event as signal or noise. The limitation of these methods is their discriminative nature: they can only remove events, not correct them or generate a new, clean stream.

\textbf{Generative Filters.} A few methods \cite{wang2020joint,duan2021guided,duan2021eventzoom,duan2023neurozoom} are generative, most notably those \cite{duan2021eventzoom,duan2023neurozoom} that convert events into stacked event frames and use a 3D-UNET to denoise and super-resolve these frames. While powerful, these methods sacrifice the core advantage of event cameras—temporal precision---by binning events into synchronous frames.

\textbf{Our Approach.} The work most closely related to ours is \cite{lin2022dvs}. Although designed for event simulation, it models event generation through stochastic diffusion of irradiance for continuous-time event simulation. Our method takes the opposite direction: we infer the diffusion from the observed events. Crucially, we show that events can be interpreted as direct samples of boundary probability fluxes, allowing us to bypass solving the full diffusion equation and derive a principled and efficient formulation for event modeling.

\section{Approach}

\subsection{Theoretical Foundation}\label{sec:theoflux}
This section derives the probability flux as a physical quantity for characterizing event distributions, as illustrated in \cref{fig:probflux}. This is achieved by modeling the evolving irradiance trajectory using a stochastic differential equation (SDE) with absorbing boundaries. For clarity, we focus on a single pixel and assume the process starts at $0$, but the formulation naturally extends to multi-pixel settings with spatial coupling and interference effects. Detailed derivations are provided in the supplementary material.

For a given event pixel, let $I_t$ denote the logarithmic irradiance intensity. Under realistic conditions---assumed to be Markovian and continuous---it follows the SDE:
\begin{equation}\label{eq:irradiancesde}
    dI_t=\mu(I_t,t)dt+\sigma(I_t,t)dW_t,
\end{equation}
where the drift $\mu(I_t,t)$ captures clean scene dynamics and the diffusion $\sigma(I_t,t)$ models thermal noise, with $W_t$ a standard Wiener process, as shown in \cref{fig:probflux}(a). Each pixel contains ON and OFF comparators that continuously compare $I_t$ against contrast thresholds $\pm C$. An event is triggered when $I_t$ exits the interval $[-C, C]$, with \textbf{timestamp} $t_s=\inf\{\tau>0\mid I_\tau\notin[-C,C]\}.$
The exit boundary determines the \textbf{polarity} $p\in\{+1,-1\}$.

Associated with this SDE is the Fokker-Planck equation \cite{elgin1984fokker}, which states the probability density function $\rho(I,t)$ of $I_t$ is the solution to the boundary value problem:
\begin{gather}\label{eq:bvpfokkerplanck}
    \partial_t\rho(I,t)=\frac{1}{2}\partial_I^2[\sigma^2(I,t)\rho(I,t)]-\partial_I[\mu(I,t)\rho(I,t)],\\\label{eq:bvpboundary}
    \text{subject to }\rho(-C,t)=\rho(C,t)=0, \quad \rho(I,0)=\rho(I),
\end{gather}
where $\rho(I)$ is the initial distribution. To simplify the expression, we introduce the probability flux density as
\begin{equation}\label{eq:probabilifyflux}
    J(I,t)=\mu(I,t)-\frac{1}{2}\partial_I[\sigma^2(I,t)\rho(I,t)],
\end{equation}
representing the instantaneous flow of probability mass, since it follows the continuity equation
\begin{equation}
    \partial_t\rho(I,t)+\nabla\cdot J(I,t)=0,
\end{equation}
and the outgoing fluxes at the boundaries,
\begin{equation}\label{eq:boundprobflux}
    p_{\pm}(t)=\mp J(\pm C,t),
\end{equation}
constitute the instantaneous ON/OFF event densities as shown in \cref{fig:probflux}(b). Hence, events can be interpreted as samples drawn from the leaking probability flux at the boundary. Since the number of leaked events should be complementary to the remaining trajectories, we have
\begin{equation}\label{eq:processlikelihood}
    \begin{split}
        \mathbb{P}(t_s\leq t)&=1-\int_{-C}^C\rho(I,t)dI,\\
        &=\int_0^t\left[p_+(\tau)+p_-(\tau)\right]d\tau.
    \end{split}
\end{equation}
The conditional distribution of polarities is derived as:
\begin{equation}\label{eq:statelikelihood}
    \mathbb{P}(p=\pm1\mid t)=\frac{p_\pm(t)}{\mathbb{P}(t_s=t)}=\frac{p_\pm(t)}{p_-(t)+p_+(t)},
\end{equation}
completing the proof of the results in \cref{fig:probflux}(c).

However, the solution of \cref{eq:bvpfokkerplanck,eq:bvpboundary} does not generally admit a closed-form expression \cite{elgin1984fokker}. To address this, we reformulate them with the failure rate \cite{finkelstein2008failure}, here as the \textbf{event density flow (EDF)}:
\begin{equation}\label{eq:eventdensityflow}
    \lambda_{\pm}(t)=\frac{p_{\pm}(t)}{1-\int_0^t[p_{+}(\tau)+p_{-}(\tau)]d\tau},
\end{equation}
which describes the instantaneous expected event rate.
The formulation of $\lambda_\pm(t)$ allows direct estimation from discrete events, since they are unconstrained nonnegative real-valued functions. In this work, we employ nonparametric temporal and spatial kernel-based smoothers to estimate these functions, as the following section shows.

\subsection{The EDFilter Framework}\label{sec:algorithm}
\begin{figure*}
  \centering
  \includegraphics[width=.95\linewidth]{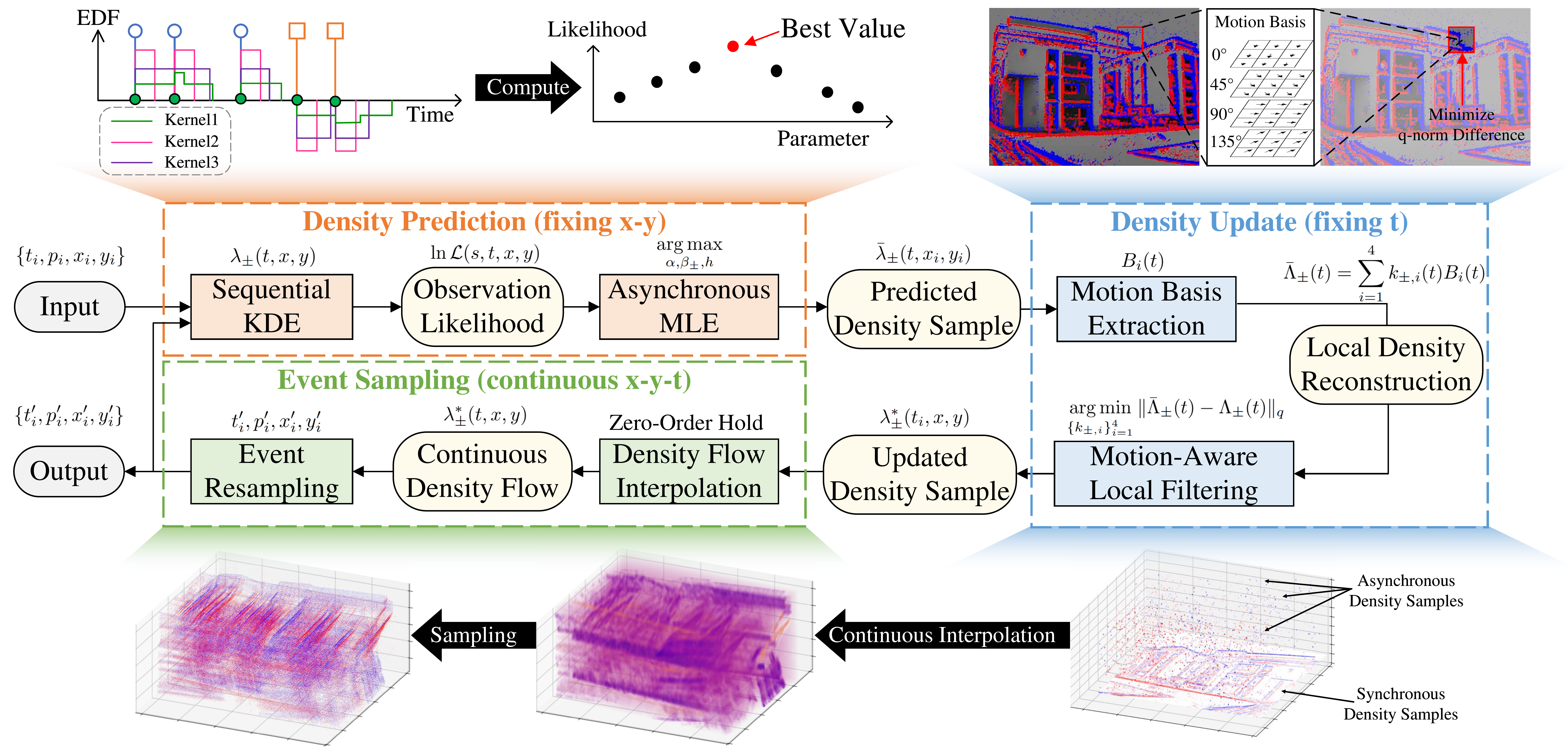}
   \caption{\textbf{The proposed EDFilter.} The density-prediction module sequentially applies KDE to the input events and selects the optimal kernel by maximizing the event-observation likelihood, producing a predicted density sample. The density-update module then spatially fuses these density samples using a motion-aware, sparsity-preserving local filter to obtain refined density estimates. Finally, the event-sampling module interpolates the continuous event-density flow via zero-order hold and resamples the filtered events for output. This output also triggers the density-prediction module in an application-dependent manner, mitigating the impact of abnormal integration.}
   \tiny
   \label{fig:method}
   \figvspace
\end{figure*}

The proposed EDFilter framework, illustrated in \cref{fig:method}, consists of three sequential components that respectively predict, update, and reconstruct the event density flow.

\subsubsection{Sequential Kernel Density Estimation\label{sec:temporal}}
The first step is to generate a continuous density prediction at any given time. This is achieved using a temporal kernel density estimator (KDE) for each pixel independently.
A rectangular kernel is adopted to model the contribution of sequential events, while both its height and bandwidth are treated as parameters to be estimated online. To account for noise, a constant false event rate is also introduced and optimized jointly. These defines our \textbf{temporal EDF model}:
\begin{gather}
    (\lambda_+(t),\lambda_-(t))=\begin{cases}
        (\psi(t),\beta_-), & \text{if } \psi(t) > 0\\
        (\beta_+,-\psi(t)), & \text{if } \psi(t) < 0\\
        (\beta_+,\beta_-), & \text{if } \psi(t) =0
    \end{cases}\label{eq:temporaledf}\\
    \psi(t)=\sum_{t_i<t}p_i\phi(t-t_i),\phi(t)=\begin{cases}
        \alpha, & \text{if } 0\leq t<h\\
        0, & \text{otherwise}
    \end{cases}
    \label{eq:temporalkde}
\end{gather}
where $\alpha$ and $h$ are the height and bandwidth of the rectangular kernel $\phi(t)$, $\psi(t)$ is the predicted density, and $\beta_\pm\geq0$ denote false event rates.
Unlike existing event filters, abnormal events are not hard-thresholded but still contribute to the continuous scene modeling process.

To determine optimal EDF parameters, we adopt the likelihood maximization strategy. If $N_e$ events are observed in the interval $(s,t]$ with times and polarities $\{(t_i,p_i)\}_{i=1}^{N_e}$, the log-likelihood is can be derived from \cref{eq:processlikelihood,eq:statelikelihood} as
\begin{equation}\label{eq:likelihood}
    \begin{split}
        \ln{\mathcal{L}(s,t)}=\sum_{i=1}^{N_e}\ln{\lambda_{p_i}(t_i)}
        - \int_{s}^{t}[\lambda_+(t)+\lambda_-(t)]dt,
    \end{split}
\end{equation}
which combines the contributions of state and process likelihoods, where $\lambda_p(t)=\lambda_\pm(t)$ for $p=\pm1$. To derive closed-form solutions, we also impose a prior distribution:
\begin{equation}
    f(\alpha,\beta_\pm,h)\propto\frac{\gamma}{h^2}\exp{\left(-\frac{\gamma}{h}\right)},
\end{equation}
where $\gamma$ is interpreted as the expected event interval. Maximizing the posterior likelihood  yields the predicted EDF:
\begin{equation}\label{eq:likelihoodmax}
    \bar{\lambda}_\pm(t)=\lambda_{\pm}(t)|_{\argmax_{(\alpha,\beta_\pm,h)} [\ln{\mathcal{L}(s,t)+\ln{f(\alpha,\beta_\pm,h)}}]}.
\end{equation}



\subsubsection{Motion-Aware Local Kernel Smoothing\label{sec:spatial}}
\begin{figure*}[!ht]
    \centering
    \begin{subfigure}{.50\textwidth}
        \centering
        \includegraphics[width=\linewidth]{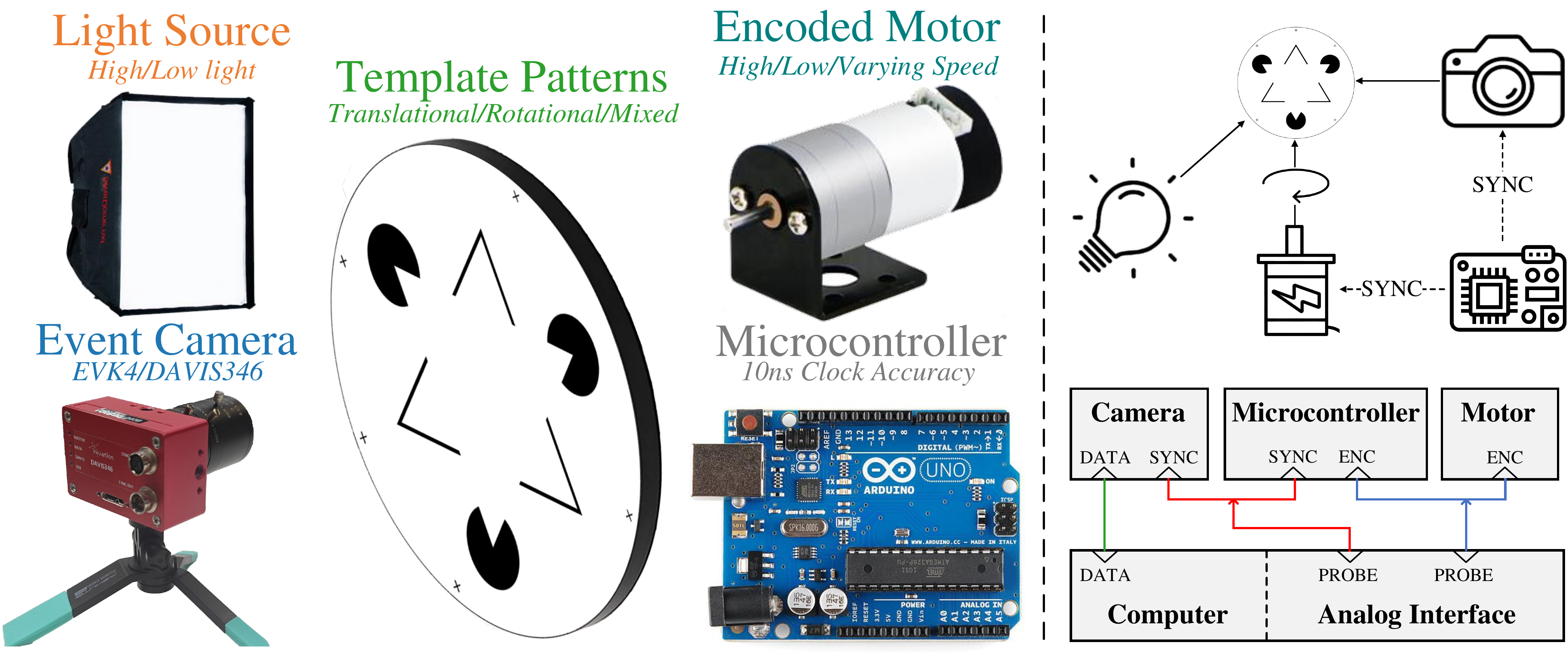}
        \caption{\textbf{RED collection details.} A variety of synchronized intensity frames and events are generated by varying each component. Synchronization is achieved by attaching analog interface to probe the synchronization and encoder signal.
        }
    \end{subfigure}
    \hfill
    \begin{subfigure}{.47\textwidth}
        \centering
        \includegraphics[width=\linewidth]{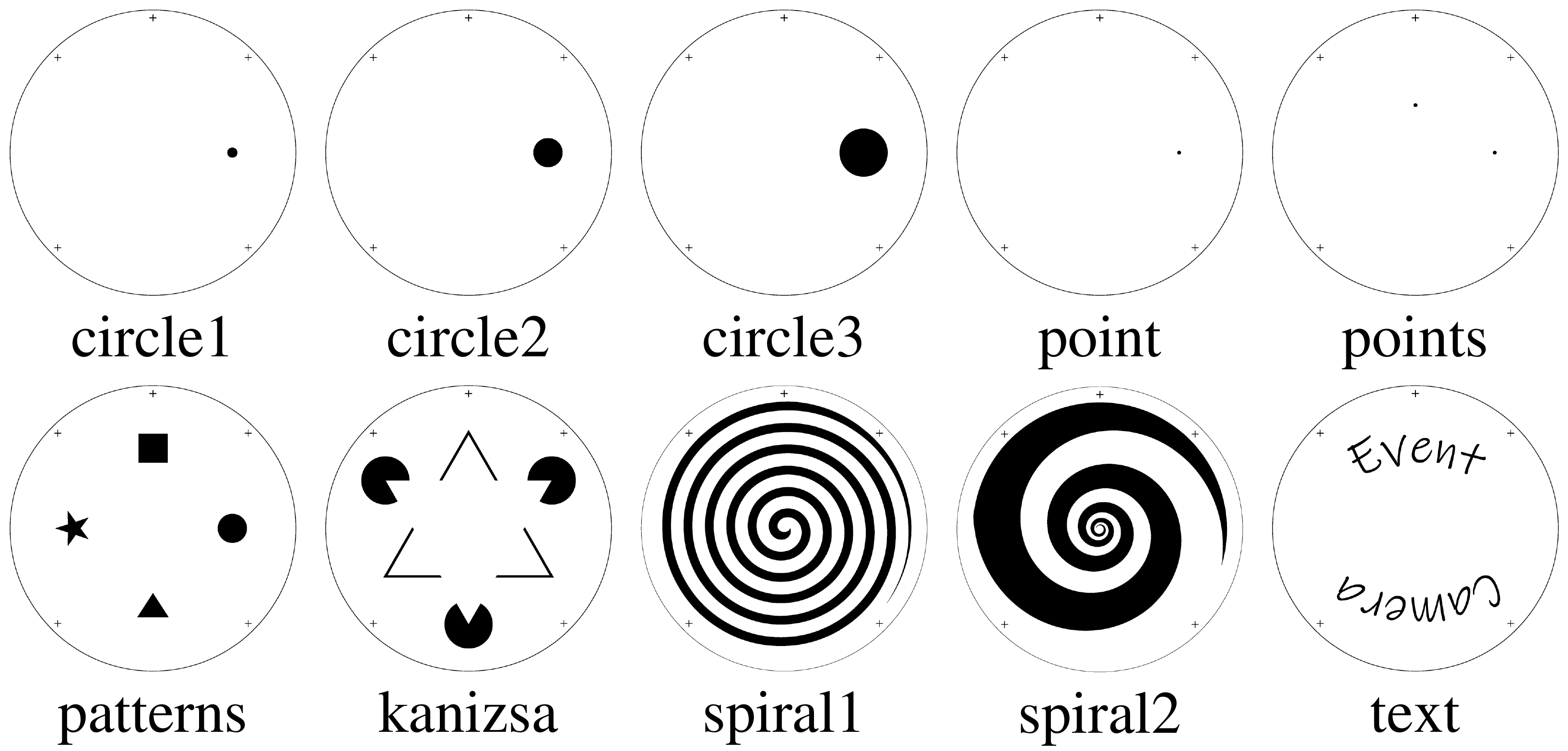}
        \caption{\textbf{Printed binary patterns.} In addition to the rotational motion types, the \textit{spiral1} and \textit{spiral2} patterns additionally simulate linear motion types. The \textit{point} and \textit{points} patterns are only used for evaluating point tracking. }
    \end{subfigure}
    \caption{System setup of the Rotary Event Dataset benchmark.}
    \label{fig:systemred}
    \vspace{-5pt}
\end{figure*}

The temporally predicted EDF $\bar{\lambda}_\pm(t)$ may exhibit structural inconsistencies due to noise or threshold mismatch. To enforce local consistency, we introduce a \textbf{spatial EDF model} that performs motion-aware smoothing by incorporating neighborhood information.
Unlike standard isotropic filters, spatial EDF patterns are directional and sparse, typically corresponding to moving edge intensities. Our model exploits this structure by imposing a directional motion prior on a $3\times3$ pixel neighborhood and expressing the smoothed EDF as a linear combination of directional basis patterns, whose coefficients are optimized via reconstruction error minimization.

We select 4 directional motion bases as
\begin{equation}\label{eq:spatialbasis}
\begin{split}
    &B_1(t) = \begin{bmatrix}
        1 & 0 & 0 \\ 
        0 & b_1(t) & 0 \\
        0 & 0 & 1
    \end{bmatrix}
    B_2(t) = \begin{bmatrix}
        0 & 1 & 0 \\
        0 & b_2(t) & 0 \\
        0 & 1 & 0
    \end{bmatrix}\\
    &B_3(t) = \begin{bmatrix}
        0 & 0 & 1 \\
        0 & b_3(t) & 0 \\
        1 & 0 & 0
    \end{bmatrix}
    B_4(t) = \begin{bmatrix}
        0 & 0 & 0 \\
        1 & b_4(t) & 1 \\
        0 & 0 & 0
    \end{bmatrix},
\end{split}
\end{equation}
representing 0°, 45°, 90°, and 135° motion patterns, with $0\leq b_i(t)\leq1$ denoting the contribution to the center pixel. Then the smoothed EDF around pixel $(x,y)$ is expressed
\begin{equation}\label{eq:spatialcontrib}
    \Lambda_\pm(t)=\sum_{i=1}^4k_{\pm,i}(t)B_i(t),
\end{equation}
where the optimal coefficients $\{k^*_{\pm,i}(t)\}_{i=1}^4$ are obtained by minimizing the $q$-norm reconstruction error:
\begin{equation}\label{eq:spatialopt}
    \{k^*_{\pm,i}(t)\}_{i=1}^4=\argmin_{\{k_{\pm,i}(t)\}_{i=1}^4}\|\Lambda_\pm(t)-\bar{\Lambda}_\pm(t)\|_q,
\end{equation}
where $\bar{\Lambda}_\pm(t)=\left(\bar{\lambda}_\pm(t,x+i,y+j)\right)_{-1\leq i,j\leq 1}$ and $1\leq q\leq 2$. We adaptively set $b_i(t)=\frac{1}{K(t)}$ if either neighboring side of the predicted EDF in the $i$-th direction is nonzero, otherwise $b_i(t)=0$, with $K(t)$ being the number of effective bases. The updated central density flow is given by:
\begin{equation}\label{eq:spatialout}
    \lambda^*_\pm(t,x,y)=\sum_{i=1}^4k_{\pm,i}(t)b_i(t).
\end{equation}


\subsubsection{Density Flow Reconstruction and Event Sampling}
Being able to evaluate $\lambda_\pm^*(t,x,y)$, the next thing to do is to reconstruct the continuous EDF.
We employ a hybrid sampling scheme combining event-triggered asynchronous local sampling and periodic global sampling, preserving transient dynamics without abnormal integral drift.

An event triggers an update of the local $3\times3$ EDF neighborhood $\left(\lambda^*_\pm(t,x+i,y+j)\right)_{-1\le i,j\le1}$, which depends on predicted EDF values from a $5\times5$ region $\left(\bar{\lambda}_\pm(t,x+i,y+j)\right)_{-2\le i,j\le2}$.
Additionally, a global update for all pixels is performed every $T_s$ seconds, regardless of event activity.
A zero-order interpolator is then used to obtain a continuous EDF representation, enabling event resampling by treating the process as a piecewise-homogeneous Poisson process.

\subsubsection{An \texorpdfstring{$O(1)$}{O(1)} Recursive Solver}
All components of EDFilter are designed for asynchronous real-time operation with constant-time complexity $O(1)$.

\textbf{Temporal EDF Prediction.} From \cref{eq:temporalkde}, if events are viewed as a sequence of polarity-modulated spike train 
\begin{equation}
    \mathcal{E}(t)=\sum_i p_i\delta(t-t_i),
\end{equation}
we have $\psi(t)=\mathcal{E}(t)*\phi(t)$ with $*$ being the convolution. As this is a linear time-invariant (LTI) system, it admits the state-space representation
\begin{equation}
    \frac{d\psi(t)}{dt}=\alpha(\mathcal{E}(t)-\mathcal{E}(t-h)).
\end{equation}
Fixing $h$, $\lambda_\pm(t)$ and its integral follow from \cref{eq:temporaledf}.
The parameters $\alpha$ and $\beta_\pm$ are optimized independently, while $h$ is chosen via a lookup table with discrete candidates maximizing likelihood, ensuring an $O(1)$ implementation.

\textbf{Spatial EDF Update.} 
Given the proven uniqueness of \cref{eq:spatialout}, solving \cref{eq:spatialopt} is a convex optimization problem with fixed-time convergence \cite{boyd2004convex}.
For $q=2$, the solution simplifies to a $3\times3$ convolution; for $q=1$, a local linear program provides the exact sparse solution.
Both cases achieve constant-time complexity.

\textbf{Event Sampling.} 
Since EDF is modeled as  piecewise-constant function, event intervals follow an exponential distribution and can be sampled sequentially. To handle discontinuities, a modified thinning algorithm of \cite{ogata1981lewis} incorporating event polarity is employed, also achieving $O(1)$ efficiency. Additional implementation details are provided in the supplementary material.

\subsection{The Rotary Event Dataset Benchmark}
Evaluating individual event quality is challenging because event counts and locations vary across sensors, and conventional cameras cannot provide microsecond-level reference intensity. To address this, we build a controlled high-speed rotary system that produces microsecond-accurate ground-truth intensity frames illustrated in \cref{fig:systemred}(a). The setup includes a precision motor, a binary-patterned rotating disk, uniform illumination, an event camera, and a microcontroller for sub-microsecond synchronization. After spatial alignment using printed crosses and temporal alignment via shared-clock trigger pulses, we obtain tightly synchronized events and intensity frames.



Ten binary patterns as shown in \cref{fig:systemred}(b) are printed on the disk to generate diverse motion types with uniform contrast. Spiral patterns create radial motion, while others produce rotational or mixed motions. Known pattern dimensions and the encoder’s orientation readings allow reconstruction of ground-truth intensity at any timestamp.
The motor uses a giant-magnetoresistance encoder with 120,000 steps per revolution (0.003°), calibrated using laser and photodiode measurements. Motor speed is PWM-controlled from 0 to 150 rpm.
Two representative event cameras are used: (a) EVK4 \cite{finateu20205} with 1280$\times$720 spatial resolution and 1 $\mu$s temporal resolution and, (b) DAVIS346 \cite{taverni2018front} with 346$\times$260 spatial resolution and 1 $\mu$s temporal resolution.
To account for illumination effects, sequences are recorded under two lighting conditions and three rotation speeds, producing 120 highly synchronized sequences spanning about 20min. Point-based patterns are included for evaluating direct point tracking.

\section{Experiments}
In this section, we demonstrate the effectiveness of EDFilter in preserving both scene irradiance and motion information on various tasks and downstream applications.


\textbf{Methods.}
We compare EDFilter with \textbf{Raw}, \textbf{EvFlow} \cite{wang2019ev}, \textbf{Ynoise} \cite{feng2020event}, \textbf{MLPF} \cite{guo2022low}, and \textbf{EventZoom} \cite{duan2021eventzoom}. Here, \textbf{Raw} denotes the baseline that returns the unfiltered event stream. For EDFilter, we set the parameters to $t-s = 100$ms, $\gamma = 4$ms, LUT keys ranging from $156.25\mu$s to $40$ms with a $2\times$ increment, $q = 2$, and $T_s = 10$ms for event denoising; for direct point tracking, we set $p = 1$ and $T_s = \infty$ (fully asynchronous sampling). Additional details are provided in the supplementary material.

\textbf{Datasets and Metrics.}
Besides the proposed RED dataset, we also evaluate on the E-MLB dataset \cite{ding2023mlb}, which contains extensive multi-level real-world noisy event sequences and provides the non-reference Event Structural Ratio (ESR) as an event quality metric.

For event denoising, we use mean ESR for evaluation on the E-MLB dataset and proposed to use normalized mean-squared-error (NMSE) on the RED dataset:
\begin{equation}
    \text{NMSE}=\frac{\sum_{p\in\{-1,1\}}\sum_{t,x,y}(g_{t,x,y,p}-d_{t,x,y,p})^2}{\sum_{p\in\{-1,1\}}\sum_{t,x,y} g_{t,x,y,p}^2},
\end{equation}
where $g_{t,x,y,p}$ is the ground-truth radiance change and $d_{t,x,y,p}$ is the corresponding event density with polarity $p$ at $(t,x,y)$, where we use a temporal binning of 1ms for evaluation. For the direct point tracking task, we propose to use the average tracking error (ATE) for comparison:
\begin{equation}
    \text{ATE}=\sum_i\frac{t_i-t_{i-1}}{T}\|P(x_i,y_i)-(c_x(t_i),c_y(t_i))\|_2,
\end{equation}
where $(t_i,x_i,y_i)$ is the coordinate of the $i$-th event and $(c_x(t_i),c_y(t_i))$ is the coordinate of the closest point center at time $t_i$, with $T$ being the time span of the evaluation period and $P(\cdot)$ being the projection operator from the image plane to the patterns plane.

\subsection{Event Denoising}
NMSE results on the RED dataset (\cref{tab:filteringtab}) show that our method achieves the best overall performance. This indicates that the proposed filter preserves radiance-change information more accurately than competing approaches. Mean ESR results on the E-MLB dataset (\cref{tab:filteringesr}) further confirm that our method best retains structural event information across diverse noise levels, consistent with the robustness reflected in the \textbf{std} column.

\begin{table*}[tbp]
\small
\centering
\caption{NMSE results of RED. DVS/EVK columns: DAVIS346 and EVK4 sequences. Bold red/blue: 1st/2nd best values.}
\label{tab:filteringtab}
\setlength{\tabcolsep}{1.2pt}
\begin{tabular}{clcclcclcclcclcclcclcclcclcc}\toprule
 &
   &
  \multicolumn{2}{c}{\textbf{circle1}} &
   &
  \multicolumn{2}{c}{\textbf{circle2}} &
   &
  \multicolumn{2}{c}{\textbf{circle3}} &
   &
  \multicolumn{2}{c}{\textbf{kanizsa}} &
   &
  \multicolumn{2}{c}{\textbf{patterns}} &
   &
  \multicolumn{2}{c}{\textbf{spiral1}} &
   &
  \multicolumn{2}{c}{\textbf{spiral2}} &
   &
  \multicolumn{2}{c}{\textbf{text}} &
   &
  \multicolumn{2}{c}{\textbf{std}} \\ \cline{3-4} \cline{6-7} \cline{9-10} \cline{12-13} \cline{15-16} \cline{18-19} \cline{21-22} \cline{24-25} \cline{27-28} 
\multirow{-2}{*}{\textbf{Method}} &
   &
  DVS &
  EVK &
   &
  DVS &
  EVK &
   &
  DVS &
  EVK &
   &
  DVS &
  EVK &
   &
  DVS &
  EVK &
   &
  DVS &
  EVK &
   &
  DVS &
  EVK &
   &
  DVS &
  EVK &
   &
  DVS &
  EVK \\ \hline
\textbf{Raw} &
   &
  0.106 &
  0.390 &
   &
  {\color[HTML]{0070C0} \textbf{0.087}} &
  0.166 &
   &
  {\color[HTML]{0070C0} \textbf{0.060}} &
  0.118 &
   &
  {\color[HTML]{FF0000} \textbf{0.061}} &
  0.054 &
   &
  {\color[HTML]{0070C0} \textbf{0.085}} &
  0.087 &
   &
  {\color[HTML]{0070C0} \textbf{0.137}} &
  0.139 &
   &
  {\color[HTML]{0070C0} \textbf{0.094}} &
  0.139 &
   &
  0.058 &
  0.052 &
   &
  {\color[HTML]{0070C0} \textbf{0.045}} &
  0.140 \\
\textbf{EventZoom} \cite{duan2021eventzoom} &
   &
  0.223 &
  0.338 &
   &
  0.111 &
  0.109 &
   &
  0.084 &
  0.076 &
   &
  0.210 &
  0.206 &
   &
  0.129 &
  0.129 &
   &
  0.331 &
  0.108 &
   &
  0.120 &
  0.108 &
   &
  0.283 &
  0.227 &
   &
  0.168 &
  0.130 \\
\textbf{EvFlow} \cite{wang2019ev} &
   &
  0.285 &
  0.504 &
   &
  0.265 &
  0.378 &
   &
  0.242 &
  0.361 &
   &
  0.094 &
  0.212 &
   &
  0.144 &
  0.214 &
   &
  0.529 &
  0.312 &
   &
  0.346 &
  0.312 &
   &
  0.065 &
  0.142 &
   &
  0.198 &
  0.163 \\
\textbf{MLPF} \cite{guo2022low} &
   &
  {\color[HTML]{0070C0} \textbf{0.104}} &
  {\color[HTML]{0070C0} \textbf{0.226}} &
   &
  0.091 &
  {\color[HTML]{0070C0} \textbf{0.073}} &
   &
  0.066 &
  {\color[HTML]{0070C0} \textbf{0.054}} &
   &
  {\color[HTML]{0070C0} \textbf{0.062}} &
  {\color[HTML]{0070C0} \textbf{0.049}} &
   &
  0.089 &
  {\color[HTML]{0070C0} \textbf{0.072}} &
   &
  0.250 &
  {\color[HTML]{0070C0} \textbf{0.094}} &
   &
  0.134 &
  {\color[HTML]{0070C0} \textbf{0.094}} &
   &
  {\color[HTML]{FF0000} \textbf{0.057}} &
  {\color[HTML]{0070C0} \textbf{0.049}} &
   &
  0.090 &
  {\color[HTML]{0070C0} \textbf{0.057}} \\
\textbf{Ynoise} \cite{feng2020event} &
   &
  0.244 &
  0.230 &
   &
  0.186 &
  0.125 &
   &
  0.159 &
  0.072 &
   &
  0.162 &
  0.082 &
   &
  0.173 &
  0.101 &
   &
  0.516 &
  0.115 &
   &
  0.309 &
  0.115 &
   &
  0.187 &
  0.089 &
   &
  0.160 &
  0.068 \\
\textbf{Ours} &
   &
  {\color[HTML]{FF0000} \textbf{0.096}} &
  {\color[HTML]{FF0000} \textbf{0.193}} &
   &
  {\color[HTML]{FF0000} \textbf{0.075}} &
  {\color[HTML]{FF0000} \textbf{0.059}} &
   &
  {\color[HTML]{FF0000} \textbf{0.055}} &
  {\color[HTML]{FF0000} \textbf{0.044}} &
   &
  0.071 &
  {\color[HTML]{FF0000} \textbf{0.043}} &
   &
  {\color[HTML]{FF0000} \textbf{0.079}} &
  {\color[HTML]{FF0000} \textbf{0.062}} &
   &
  {\color[HTML]{FF0000} \textbf{0.121}} &
  {\color[HTML]{FF0000} \textbf{0.086}} &
   &
  {\color[HTML]{FF0000} \textbf{0.088}} &
  {\color[HTML]{FF0000} \textbf{0.086}} &
   &
  {\color[HTML]{0070C0} \textbf{0.058}} &
  {\color[HTML]{FF0000} \textbf{0.038}} &
   &
  {\color[HTML]{FF0000} \textbf{0.036}} &
  {\color[HTML]{FF0000} \textbf{0.051}} \\\bottomrule
\end{tabular}
\end{table*}

\begin{table}[]
\small
\centering
\caption{\textbf{Mean ESR results on E-MLB}. ND\%: noise level. Bold red/blue: 1st/2nd best values.}
\label{tab:filteringesr}
\setlength{\tabcolsep}{3.5pt}
\begin{tabular}{clcccc}\toprule
\multirow{2}{*}{\textbf{Method}} & & \multicolumn{4}{c}{\textbf{E-MLB (Day/Night)}}         \\ \cline{3-6} 
                                 & & ND1       & ND4       & ND16      & ND64      \\ \hline
\textbf{Raw}                     & & 0.82/0.89 & 0.82/0.82 & 0.82/0.79 & 0.79/0.77 \\
\textbf{EventZoom} \cite{duan2021eventzoom}               & & \bblue{1.00}/\bblue{1.06} & \bblue{0.99}/\bblue{1.01} & \bblue{1.00}/\bred{1.01} & \bblue{0.97}/\bblue{0.99} \\
\textbf{EvFlow} \cite{wang2019ev}                 &  & 0.85/0.97 & 0.88/0.98 & 0.87/0.89 & 0.83/0.80 \\
\textbf{MLPF} \cite{guo2022low}                 & & 0.85/0.93 & 0.89/0.93 & 0.85/0.91 & 0.84/0.91 \\
\textbf{Ynoise} \cite{feng2020event}                  & & 0.87/1.01 & 0.86/0.94 & 0.86/0.88 & 0.82/0.79 \\
\textbf{Ours}                   & & \bred{1.02}/\bred{1.07} & \bred{1.02}/\bred{1.03} & \bred{1.02}/\bblue{1.00} & \bred{1.00}/\bred{0.99}\\\bottomrule
\end{tabular}
\tabvspace
\end{table}


Qualitative results in \cref{fig:filtering} support these findings. EvFlow is overly aggressive and removes many real events. MLPF and Ynoise retain structure better, with MLPF generally performing second-best due to its richer local input representation and MLP discriminator. EventZoom, though visually plausible, performs poorly in NMSE because its fixed discretization sacrifices temporal precision and its convolutional modules diffuse spatial energy, degrading radiance-change accuracy. In contrast, our method adapts its temporal kernel to scene dynamics and enforces spatial sparsity via motion-aware local modeling and a $q$-norm reconstruction criterion, yielding cleaner and more faithful events.

\begin{figure*}
    \centering
    \includegraphics[width=.9\linewidth]{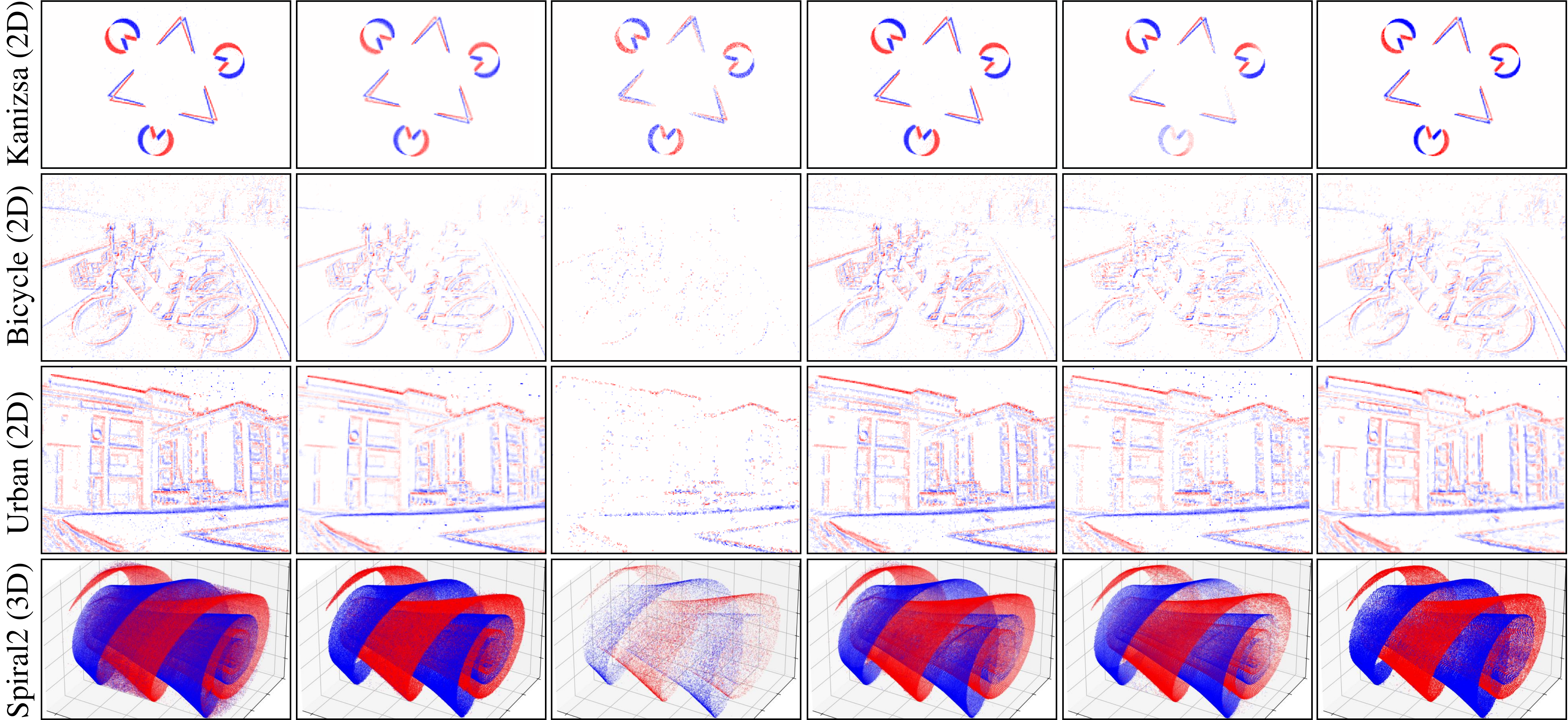}\\
    \phantom{} \hfill (a) Raw \hspace{0.5cm} (b) EventZoom \cite{duan2021eventzoom} \hspace{0.0cm} (c) EvFlow \cite{wang2019ev} \hspace{0.2cm} (d) MLPF \cite{guo2022low} \hspace{0.3cm} (e) Ynoise \cite{feng2020event} \hspace{0.7cm} (f) Ours \hspace{1.5cm}
    \caption{Visualization of filtered events as 2D event frames (top 3 rows) and 3D point clouds (bottom row). Sequences come from the proposed RED dataset (rows 1,4), the E-MLB dataset \cite{ding2023mlb} (row 2) and the ECD dataset \cite{mueggler2017event} (row 3).}
    \label{fig:filtering}
    \figvspace
\end{figure*}

\subsection{Direct Point Tracking}\label{sec:exptracking}
The direct point tracking results on ATE (\cref{tab:tracking}) show that our method ranks among the top two across all sequences. The closest competitor is EvFlow, which exhibits the lowest variance and thus the highest tracking stability. However, as evidenced by the filtering results in \cref{fig:filtering}, EvFlow performs poorly in preserving accurate irradiance changes. The 3D visualizations in \cref{fig:blob-tracking} further reveal that EvFlow retains only the most salient events along motion trajectories—beneficial for coarse motion tracking but detrimental to faithful radiance reconstruction. This highlights an intrinsic tension between accurate irradiance perception and motion perception, reflecting the separation of state and process information in conventional models.

By contrast, our method performs strongly on both fronts. This is achieved by modeling the complete event distribution through probability flux and optimizing it within a unified framework, thereby bridging the gap between state-information modeling and process-information modeling.

\begin{table}[tbp]
\small
\centering
\caption{\textbf{ATE results in millimeters (mm).} DVS/EVK columns: DAVIS346 and EVK4 sequences. Red/blue: 1st/2nd best values.}
\label{tab:tracking}
\setlength{\tabcolsep}{1.1pt}
\begin{tabular}{clcclcclcc}\toprule
                                  &  & \multicolumn{2}{c}{\textbf{point}} &  & \multicolumn{2}{c}{\textbf{points}} &  & \multicolumn{2}{c}{\textbf{std}} \\ \cline{3-4} \cline{6-7} \cline{9-10} 
\multirow{-2}{*}{\textbf{Method}} &  & DVS           & EVK              &  & DVS            & EVK              &  & DVS          & EVK             \\ \hline
\textbf{Raw}                      &  & 7.542           & 46.319           &  & 4.711            & 30.728           &  & 1.787          & 24.665          \\
\textbf{EventZoom} \cite{duan2021eventzoom}                &  & 4.168           & 4.711            &  & 3.680            & 3.084            &  & 2.096          & 2.672           \\
\textbf{EvFlow} \cite{wang2019ev} &
   &
  {\color[HTML]{FF0000} \textbf{2.290}} &
  {\color[HTML]{0070C0} \textbf{2.978}} &
   &
  {\color[HTML]{FF0000} \textbf{2.306}} &
  {\color[HTML]{0070C0} \textbf{2.125}} &
   &
  {\color[HTML]{FF0000} \textbf{0.581}} &
  {\color[HTML]{FF0000} \textbf{1.071}} \\
\textbf{MLPF} \cite{guo2022low}                     &  & 7.005           & 28.811           &  & 4.499            & 17.444           &  & 1.540          & 11.635          \\
\textbf{Ynoise} \cite{feng2020event}                  &  & 2.805           & 7.223            &  & 3.428            & 4.444            &  & 1.031          & 7.864           \\
\textbf{Ours} &
   &
  {\color[HTML]{0070C0} \textbf{2.358}} &
  {\color[HTML]{FF0000} \textbf{2.954}} &
   &
  {\color[HTML]{0070C0} \textbf{2.518}} &
  {\color[HTML]{FF0000} \textbf{2.046}} &
   &
  {\color[HTML]{0070C0} \textbf{0.810}} &
  {\color[HTML]{0070C0} \textbf{1.297}}\\\bottomrule
\end{tabular}
\tabvspace
\end{table}

\begin{figure*}
    \centering
    \includegraphics[width=.85\linewidth]{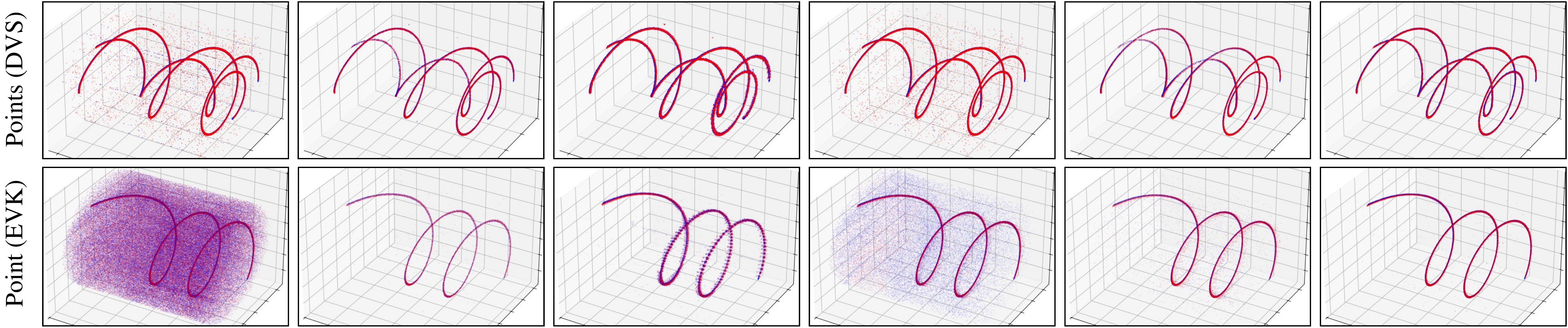}\\
    \phantom{} \hfill (a) Raw \hspace{0.3cm} (b) EventZoom \cite{duan2021eventzoom} \hspace{0.0cm} (c) EvFlow \cite{wang2019ev} \hspace{0.0cm} (d) MLPF \cite{guo2022low} \hspace{0.2cm} (e) Ynoise \cite{feng2020event} \hspace{0.5cm} (f) Ours \hspace{1.9cm}
    \caption{Visualization of filtered events in 3D as point clouds. Sequences are captured with DAVIS346 (top row) and EVK4 (bottom row).}
    \label{fig:blob-tracking}
    \figvspace
    \vspace{-1em}
\end{figure*}
\subsection{Analysis}
\hspace*{\parindent}\textbf{Ablation Studies.} Our pipeline consists of three modules. To assess their impact, we vary one component while keeping the others fixed. The averaged results for both filtering and direct point tracking are shown in \cref{fig:ablation}. Substituting the temporal model shows that the proposed EDF formulation yields the best performance on both tasks. This advantage is particularly clear in motion perception, where Poisson \cite{gu2021spatio} and Gaussian \cite{li2021asynchronous} temporal models rely on fixed correlation windows, whereas EDF adapts to scene dynamics through likelihood maximization. For spatial modeling, L1 better preserves motion accuracy, while L2 more faithfully maintains irradiance changes. Regarding the sampling strategy, asynchronous sampling gives the lowest ATE but higher NMSE than hybrid sampling (Async vs. Hybrid10ms). As discussed in \cref{sec:exptracking}, a more aggressive sampling is preferable when motion is the primary objective. Synchronous sampling performs worst across all metrics, underscoring its inefficiency for inherently asynchronous event signals. These observations also indicate that extremely fine sampling (Sync1ms vs. Sync10ms) offers limited gains relative to its computational cost.

\begin{figure}
    \centering
    \includegraphics[width=.9\linewidth]{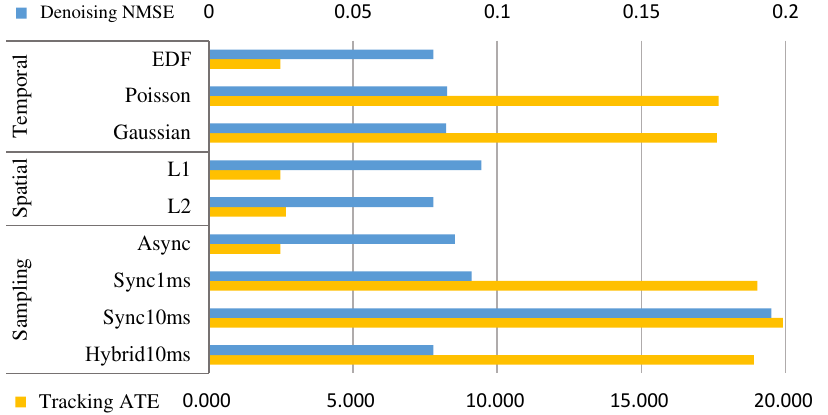}
    \caption{Ablation studies of EDFilter by its temporal modeling, spatial modeling, and sampling paradigm.}
    \label{fig:ablation}
    \figvspace
\end{figure}

\textbf{Runtime.} We evaluate the runtime characteristics of our method for online deployment. Since event streams are asynchronous, two complementary metrics are used: latency, measuring the time to produce an output after a single event arrives (online test), and throughput, measuring the sustained event-processing rate over time (offline/online test). We benchmark three representative methods---MLPF, EventZoom, and our EDFilter---under identical conditions with $346\times260$ resolution on a single core of an R9-7945HX CPU, as shown in \cref{tab:runtime}. For synchronous EventZoom, throughput is computed on real events with an average rate of 1.23 events/$\mu$s.
Our method achieves the lowest latency at 4.99 $\mu$s, outperforming MLPF by roughly 51\%, demonstrating that physically grounded signal processing can still operate at microsecond scale. EventZoom has much higher latency due to the event-frame accumulation and heavy frame-processing blocks. In terms of throughput, EDFilter can already process 210k events/s  surpassing EventZoom.


\begin{table}[tbp]
\centering
\small
\caption{\textbf{Latency and throughput comparison of event-signal filters.} Latency is measured in seconds (s), and throughput is measured in events per second (events/s).}
\label{tab:runtime}
\begin{tabular}{cccc}\toprule
\textbf{Metric}           & \textbf{MLPF \cite{guo2022low}}  & \textbf{EventZoom \cite{duan2021eventzoom}} & \textbf{Ours}  \\ \hline
\textbf{Latency}           & 7.55e-6 & 1.21e0      & {4.99e-6} \\
\textbf{Throughput} & {4.73e6} & 1.34e5      & 2.10e5 \\\bottomrule
\end{tabular}
\tabvspace
\vspace{-1em}
\end{table}

\subsection{Applications}

\hspace{\parindent}\textbf{Event-Based SLAM.}
The reference SLAM algorithm is selected as ESVO \cite{zhou2021event}, which uses stereo semi-global matching to recover camera trajectories. The performance is evaluated by comparing the computed trajectories and ground-truth trajectories on the MVSEC \cite{zhao2014mvsec} and RPG \cite{zhou2018semi} datasets. The absolute translational error (RMS) results and the top view of computed trajectories are provided in \cref{fig:appSLAM}. After inserting the proposed EDFilter, the tracking performance is improved for most sequences, especially for those previously having poor results like \textit{indoor\_flying1} and \textit{indoor\_flying3}. Such results are a direct demonstration of the motion preservation ability of the proposed method.


\begin{figure}
    \centering
    \begin{subfigure}{\linewidth}
        \centering
        \includegraphics[width=\linewidth]{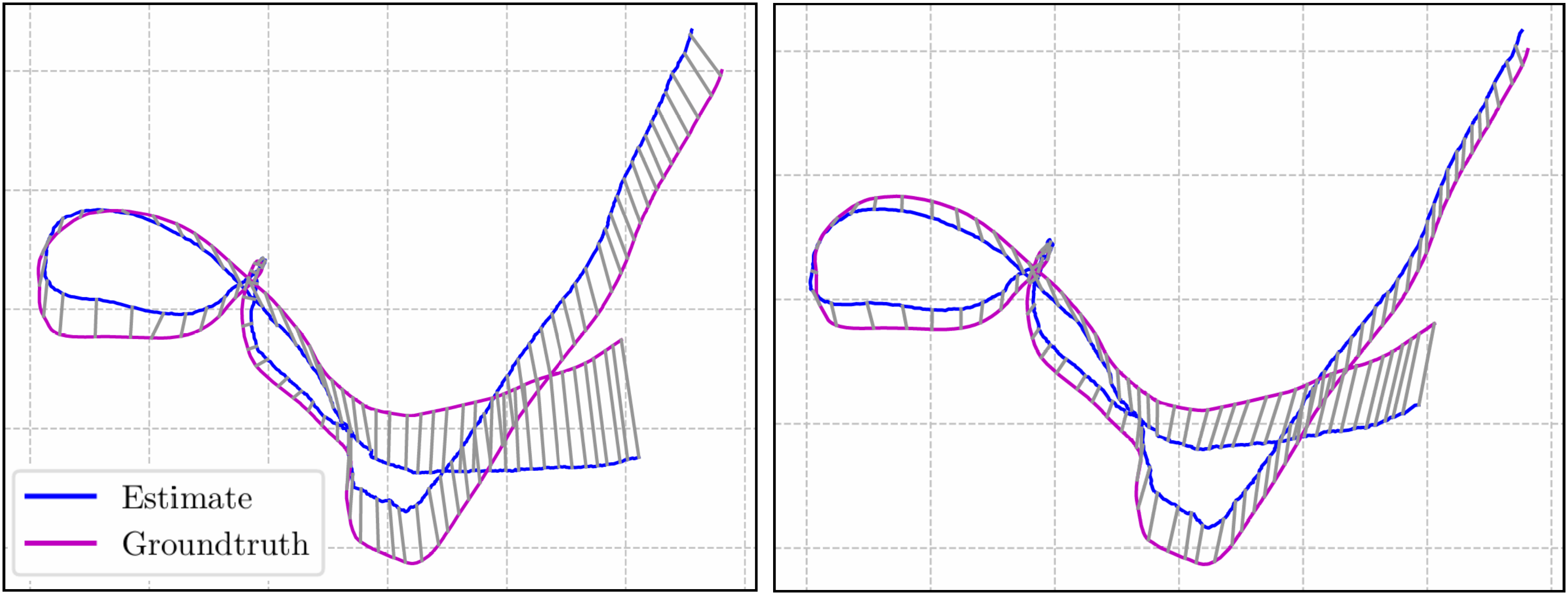}\\\vspace{-3.1cm}
        \phantom{}\hfill w/o EDFilter \hspace{2.3cm} w/ EDFilter \hspace{1.1cm}
        \vspace{2.7cm}
    \end{subfigure}\hfill
    \begin{subfigure}{\linewidth}
    \centering
    \small
    \setlength{\tabcolsep}{2pt}
        \begin{tabular}{ccccccc}\toprule
        \textbf{Method}        & \textbf{\begin{tabular}[c]{@{}c@{}}indoor\_\\ flying1\end{tabular}} & \textbf{\begin{tabular}[c]{@{}c@{}}indoor\_\\ flying3\end{tabular}} & \textbf{\begin{tabular}[c]{@{}c@{}}rpg\_\\ bin\end{tabular}} & \textbf{\begin{tabular}[c]{@{}c@{}}rpg\_\\ boxes\end{tabular}} & \textbf{\begin{tabular}[c]{@{}c@{}}rpg\_\\ desk\end{tabular}} & \textbf{\begin{tabular}[c]{@{}c@{}}rpg\_\\ monitor\end{tabular}} \\ \hline
        \textbf{w/o EDFilter}          & 18.4                                                                & 10.1                                                                & 3.0                                                          & 6.1                                                            & {\color[HTML]{FF0000} \textbf{3.5}}                           & 2.3                                                              \\
        \textbf{w/ EDFilter} & {\color[HTML]{FF0000} \textbf{15.5}}                                & {\color[HTML]{FF0000} \textbf{8.8}}                                 & {\color[HTML]{FF0000} \textbf{2.9}}                          & {\color[HTML]{FF0000} \textbf{4.8}}                            & 3.9                                                           & {\color[HTML]{FF0000} \textbf{2.2}}                             \\\bottomrule
        \end{tabular}\vspace{-.7em}
    \end{subfigure}
    \caption{SLAM results on the MVSEC \cite{zhao2014mvsec} and RPG \cite{zhou2018semi} datasets. Top: top-view trajectory of \textit{indoor\_flying1}. Bottom: RMS tracking error in centimeters (cm). Bold red: best value.}
    \figvspace
    \label{fig:appSLAM}
\end{figure}

\textbf{Video Reconstruction.}
The reference video reconstruction algorithm is selected as E2VID \cite{rebecq2019high}, which learns to generate corresponding video frames from input event frames with a recurrent network. As with evaluating other data preprocessing pipelines, we fine-tune the pretrained model on filtered events to obtain the final reconstruction results. The performance is assessed on the E-MLB \cite{ding2023mlb} dataset, reporting the average peak-to-signal-ration (PSNR), mean-squared error (MSE), structural similarity (SSIM) index, and perceptual similarity (LPIPS) \cite{zhang2018unreasonable} and reconstructed frames in \cref{fig:appRECON}. The reconstruction results have been improved for all the metrics, which is also evident visually. Such results clearly demonstrate the benefit of EDFilter in irradiance preservation.

\begin{figure}
    \centering
    \begin{subfigure}{\linewidth}
        \centering
        \includegraphics[width=\linewidth]{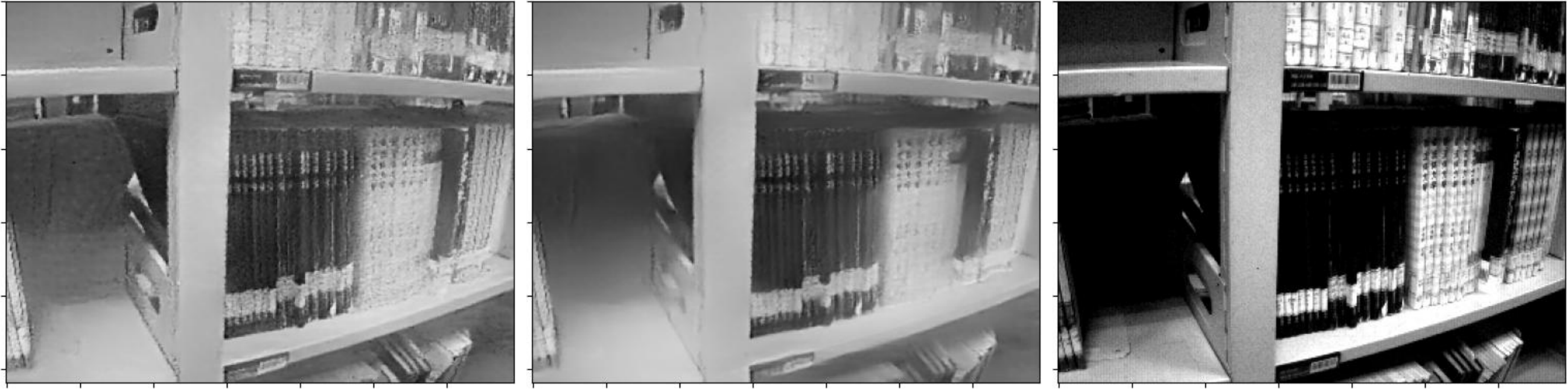}\\\vspace{-2.1cm}
        \phantom{} \hfill \white{w/o EDFilter} \hspace{.9cm} \white{w/ EDFilter} \hspace{.8cm} \white{Ground Truth} \hspace{.3cm}\vspace{1.7cm}
    \end{subfigure}
    \begin{subfigure}{\linewidth}
    \centering
    \small
        \begin{tabular}{ccccc}\toprule
        \textbf{Method} & \textbf{PSNR$\uparrow$}                         & \textbf{MSE$\downarrow$}                          & \textbf{SSIM$\uparrow$}                        & \textbf{LPIPS$\downarrow$}                       \\ \hline
        \textbf{w/o EDFilter}          & 16.04                                 & 0.035                                 & 0.53                                 & 0.30                                 \\
        \textbf{w/ EDFilter} & {\color[HTML]{FF0000} \textbf{18.37}} & {\color[HTML]{FF0000} \textbf{0.019}} & {\color[HTML]{FF0000} \textbf{0.68}} & {\color[HTML]{FF0000} \textbf{0.24}} \\\bottomrule
        \end{tabular}
        \vspace{-.7em}
    \end{subfigure}
    \caption{Video reconstruction results on the E-MLB \cite{ding2023mlb} dataset. Top: reconstructed frame. Bottom: metrics. Bold red: best value.}
    \figvspace
    \label{fig:appRECON}
    \vspace{-1em}
\end{figure}



\section{Conclusion}



We proposed EDFilter, an event signal filtering framework grounded in a probability-flux interpretation of event generation. By estimating the threshold-crossing probability flux through efficient temporal KDE, motion-aware spatial smoothing, and asynchronous resampling, EDFilter provides physically consistent and real-time event reconstruction. The RED dataset with microsecond irradiance ground truth is also captured for evaluation. Experiments show that EDFilter achieves superior denoising accuracy and tracking performance, highlighting probability-flux modeling as a promising direction for future research on generative models, physics-informed neural representations, and probabilistic formulations in event-based vision.


{
    \small
    \bibliographystyle{ieeenat_fullname}
    \bibliography{main}
}

\clearpage
\setcounter{page}{1}
\maketitlesupplementary

\section{Proof}
This section provides detailed proofs for \cref{sec:theoflux}. The main purpose is to make the text self-contained and organized for event specialists. All the symbols used here do not necessarily have the same meaning as in the main text and should be understood based on the context.

\subsection{Stochastic Differential Equation and Fokker-Planck Equation}\label{sec:sdefokkerplanck}
In \cref{sec:theoflux} we state that \cref{eq:bvpfokkerplanck} can be derived from \cref{eq:irradiancesde}. In fact, this is a direct result of the Markovian property since an SDE is already continuous, and we provide a proof in this part.

Since the process is Markovian, we have the Chapman-Kolmogorov equation:
\begin{equation}
    p(I,t)=\int p(I,t|I',t')p(I',t')dx',
\end{equation}
where $p(I,t)$ is the probability density of $I$ at time $t$ and $p(I,t|I',t')$ is the state transition function from $I'$ at $t'$ to $I$ at $t$. Since $p(I,t|I',t')$ is a probability distribution, the characteristic function of the offset $I-I'$ is:
\begin{equation}\label{eq:kramersmoyal}
\begin{split}
    \hat{p}(k,t|I',t')=\mathbb{E}[e^{ik(I-I')}]&=\int e^{ik(I-I')}p(I,t,|I',t')dI\\
    &=\sum_{n=0}^\infty \frac{(ik)^n}{n!}\mu_n(t|I',t'),
\end{split}
\end{equation}
where
\begin{equation}
    \mu_n(t|I',t')=\int (I-I')^np(I,t|I',t')dx,
\end{equation}
is the $n$-th moment. With the inverse transform, we have the formal expression:
\begin{equation}
    p(I,t|I',t')=\sum_{n=0}^\infty\frac{1}{n!}\mu_n(t|I',t')\delta^{(n)}(I-I'),
\end{equation}
where $\delta^{(n)}(x)$ is the $n$-th formal (weak) derivative of $\delta(x)$ and have the property:
\begin{equation}
    \int\delta^{(n)}(x)\phi(x)dx=(-1)^n\phi^{(n)}(0),
\end{equation}
for any $n$-th differentiable function $\phi(x)$. Setting $t'=t-\tau$ with $\tau\to0_+$, because $W_t$ is a standard Wiener process, we have:
\begin{equation}
    \lim_{\tau\to0_+}p(I,t|I',t-\tau)=\frac{e^{-\frac{(I-I'-\mu\tau)^2}{2\sigma^2\tau}}}{\sqrt{2\pi\tau\sigma^2}}.
\end{equation}
So that $I-I'$ obeys a normal distribution with mean $\mu\tau$ and variance $\sigma^2\tau$. We have:
\begin{gather}
    \mu_0(t|I',t-\tau)=1,\\
    \mu_1(t|I',t-\tau)=\mu\tau,\\
    \mu_2(t|I',t-\tau)=\sigma^2\tau+(\mu\tau)^2=\sigma^2\tau+o(\tau),\\
    \mu_{n>2}(t|I',t-\tau)=o(\tau).
\end{gather}
Substituting the results back to \cref{eq:kramersmoyal} gives:
\begin{equation}
\begin{split}
    p(I,t)&=p(I,t-\tau)+o(\tau)\\
    &+\tau\int\mu \delta^{(1)}(I-I')p(I',t-\tau)dI'\\
    &+\frac{\tau}{2}\int\sigma^2\delta^{(2)}(I-I')p(I',t-\tau)dI'.
\end{split}
\end{equation}
Moving $p(I,t-\tau)$ to the left hand and dividing both sides with $\tau$, we have:
\begin{equation}
    \partial_tp(I,t)=-\partial_I(\mu p(I,t))+\frac{1}{2}\partial_{I}^2(\sigma^2p(I,t)),
\end{equation}
which is exactly the Fokker-Planck equation in \cref{eq:bvpfokkerplanck}.

\subsection{Multidimensional Irradiance Diffusion Process}
To extend the above results to the multidimensional case with complex interaction, we start with a multidimensional stochastic differential equation:
\begin{equation}
    d\bm{I}_t=\bm{\mu}(\bm{I}_t,t)dt+\bm{\sigma}(\bm{I}_t,t)d\bm{W}_t,
\end{equation}
where $\bm{I}_t=(I(\bm{x},t)))_{\bm{x}\in[0,W)\times[0,H)}$ is the diffusion process of logarithmic scene irradiance with a spatial resolution of $W\times H$, $\bm{\mu}(\bm{I}_t,t)$ is the clean irradiance time derivative, $\bm{\sigma}(\bm{I}_t,t)$ is the thermal noise level and $\bm{W}_t$ is a standard multivariate wiener process.

Following the same derivations in \cref{sec:sdefokkerplanck}, the multidimensionial probability density function $p(\bm{I},t)$ satisfies:
\begin{gather}\label{eq:fpeq1}
    \partial_tp(\bm{I},t)=Lp(\bm{I},t)\qquad \forall\bm{I}\in D,\\\label{eq:fpeq2}
    L = -\partial_i\mu_i(\bm{I},t)+\partial_i\partial_j B_{ij}(\bm{I},t),\\\label{eq:fpeq3}
    \bm{B}(\bm{I},t)=\frac{1}{2}\bm{\sigma}(\bm{I},t)\bm{\sigma}(\bm{I},t)^T,\\\label{eq:fpeq4}
    p(\bm{I},0)=p_0(\bm{I}),\\\label{eq:boundabsorption}
    p(\bm{I},t)=0,\qquad \forall\bm{I}\in\partial D,
\end{gather}
where the Einstein summation convention is understood and $D=(-C,C)^{W\times H}$ is the contrast boundary. In this case, we can derive the corresponding probability flux density as:
\begin{gather}\label{eq:transporteq}
    \partial_tp(\bm{I},t)+\nabla\cdot\bm{J}(\bm{I},t)=0\qquad\forall\bm{I}\in D,\\\label{eq:transportinit}
    J_i(\bm{I},t)=[\mu_i(\bm{I},t)-\partial_jB_{ij}(\bm{I},t)]p(\bm{I},t),\\\label{eq:transportbound}
    p_\pm^k(t)=\int_{\{\bm{I}\in\partial D\mid I_k=C_{ON/OFF}\}}\bm{J}(\bm{I},t)\cdot\bm{n}dS,
\end{gather}
where $\bm{J}(\bm{I},t)=\{J_i(\bm{I},t)\}$ is the probability flux density of this diffusion process, $\bm{n}$ is the unit outer normal and $p_\pm^k(t)$ is the net flux of the ON/OFF event at the $k$-th pixel. 

To show that the internal and boundary behavior jointly describe all possible irradiance trajectories at any time, let $\bm{p}_\pm(t)=\{p_+^k(t),p_-^k(t)\}$, then the population of irradiance trajectories that don't generate an event is:
\begin{equation}\label{eq:remainingpopulation}
    \begin{split}
        \mathcal{N}(t):&=\int_Dp(\bm{I},t)dV\\
        &=1-\int_D\int_0^t\nabla\cdot\bm{J}(\bm{I},\tau)d\tau dV\\
        &=1-\int_0^t\int_{\partial D}\bm{J}(\bm{I},\tau)\cdot\bm{n}dSd\tau\\
        &=1-\int_0^t\|\bm{p}_\pm(\tau)\|_1d\tau,
    \end{split}
\end{equation}
where the third equation can be derived by applying Gauss's theorem using \cref{eq:transporteq}. The distribution of the state and process information of events can be derived accordingly.

\subsection{Physical Meaning of Event Density Flow}
We adopt another formulation of the net flux $p_\pm(t)$ as the event density flow $\lambda_\pm(t)$ defined in \cref{eq:eventdensityflow} without further explanation. This part discusses its physical meaning and properties to help understand the formulations in \cref{sec:algorithm}.

First, we show that $p_\pm(t)$ can be derived from $\lambda_\pm(t)$. Observing that:
\begin{gather}
    \frac{\lambda_+(t)}{\lambda_-(t)}=\frac{p_+(t)}{p_-(t)},\\
    [\lambda_-(t)+\lambda_+(t)]dt = - d\ln(1-\int_0^t[p_-(\tau)+p_+(\tau)]d\tau).
\end{gather}
It's easy to prove:
\begin{equation}\label{eq:inverserelation}
    p_\pm(t)=\lambda_\pm(t)\exp(-\int_0^t[\lambda_-(\tau)+\lambda_+(\tau)]d\tau).
\end{equation}
Since the map from probability flux to event density flow is bijective, they convey the same information. A multidimensional formulation is:
\begin{equation}
    \bm{\lambda}_\pm(t)=\frac{\bm{p}_\pm(t)}{1-\int_0^t\|\bm{p}_\pm(\tau)\|d\tau},
\end{equation}
where the inverse relation can also be derived accordingly.

Then we prove its physical meaning as the expected event rate. We take an infinitesimal interval $dt$ and treat the 1D case. Observe that:
\begin{equation}
\begin{split}
    \lambda_+(t)dt&=\frac{p_+(t)dt}{\mathcal{N}(t)}\\
    &=\frac{\mathbb{P}(t_s\in[t,t+dt])}{\mathbb{P}(t_s\geq t)}\\
    &=\mathbb{P}(t_s\in[t,t+dt]\mid t_s\geq t)\\
    &=\mathbb{E}[N_+([t,t+dt])\mid t_s\geq t],
\end{split}
\end{equation}
where the last equation holds as long as no two events can be generated at the same time, so there is either zero or one event in an infinitesimal interval. The case for the negative event and vector form is the same, which shows the physical meaning of event density flow as the expected number of events.

The asymptotic characteristics are derived from the Fokker-Planck equation. According to the operator theory, the adjoint operator of $L$ is:
\begin{equation}
    L^*=\mu_i(\bm{I},t)\partial_i+B_{ij}(\bm{I},t)\partial_i\partial_j.
\end{equation}
Then we restate the theorem in \cite{elgin1984fokker} in our case as:
\begin{proposition}\label{prop:asymptoticfokkerplanck}
    If D is a bounded domain, $\partial D$ has a piecewise continuous normal, and $L^*$ is a uniformly elliptic operator with sufficiently smooth coefficients in D, then the "steady-state" rate is the principal eigenvalue of the Fokker-Planck operator with absorbing boundary conditions.
\end{proposition}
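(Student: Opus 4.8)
The plan is to reduce the asymptotic claim to the spectral theory of the (generally non--self-adjoint) elliptic generator $L$ and then to match the long-time decay of the surviving mass $\mathcal{N}(t)$ against the outgoing boundary flux. First I would set up the spectral framework. Under the stated hypotheses---$D$ bounded, $\partial D$ with piecewise continuous normal, $L^*$ uniformly elliptic with smooth coefficients---the semigroup $e^{tL}$ generated by the Fokker--Planck operator with absorbing Dirichlet conditions is compact on $L^2(D)$, so its generator has purely discrete spectrum $\{\mu_n\}$ with eigenfunctions solving $L\phi_n=\mu_n\phi_n$ and $\phi_n|_{\partial D}=0$. Because the semigroup propagates the probability density $p\geq0$, it is positivity-preserving, and the parabolic strong maximum principle renders it irreducible (strongly positive).

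Second, I would invoke the Krein--Rutman theorem on this compact, strongly positive semigroup. This yields a dominant eigenvalue $\mu_1=-\Lambda_1$ with $\Lambda_1>0$ that is real, algebraically simple, and strictly dominant ($\mathrm{Re}\,\mu_n<-\Lambda_1$ for $n\geq2$), and whose eigenfunction $\phi_1$ may be chosen strictly positive in $D$; the same holds for the adjoint principal eigenfunction $\psi_1>0$ of $L^*$. Here $\Lambda_1$ is precisely the ``principal eigenvalue'' of the proposition, understood as the slowest decay rate, i.e.\ minus the dominant eigenvalue of $L$. This step is the substitute for the spectral theorem that the non--self-adjoint drift term forbids, and it is the crux of the argument.

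Third, with the spectral gap in hand I would expand the solution and read off the rate. Expanding the initial density $p_0$ in the eigenbasis gives, for large $t$, $p(\bm{I},t)=c_1 e^{-\Lambda_1 t}\phi_1(\bm{I})+O(e^{-\Lambda_2 t})$ with $\Lambda_2>\Lambda_1$, where $c_1=\langle p_0,\psi_1\rangle/\langle\phi_1,\psi_1\rangle>0$ because $p_0,\phi_1,\psi_1$ are all nonnegative. Substituting into \cref{eq:remainingpopulation} gives the surviving mass $\mathcal{N}(t)\sim c_1 e^{-\Lambda_1 t}\int_D\phi_1\,dV$, while differentiating the same identity gives the total outflux $\|\bm{p}_\pm(t)\|_1=-\mathcal{N}'(t)\sim\Lambda_1 c_1 e^{-\Lambda_1 t}\int_D\phi_1\,dV$. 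Taking the ratio, the total event density flow $\|\bm{\lambda}_\pm(t)\|_1=\|\bm{p}_\pm(t)\|_1/\mathcal{N}(t)\to\Lambda_1$, so the steady-state rate is exactly the principal eigenvalue $\Lambda_1$.

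The main obstacle is establishing the existence and simplicity of a real, dominant principal eigenvalue for the non--self-adjoint $L$: without self-adjointness one cannot presume real eigenvalues or an orthonormal eigenbasis, yet the entire conclusion rests on a single real slowest-decaying mode. The resolution is the Krein--Rutman / Perron--Frobenius machinery, which depends on two analytic inputs to be verified from the hypotheses---compactness of the resolvent (from elliptic regularity on the bounded domain with smooth coefficients) and irreducibility of the semigroup (from the parabolic strong maximum principle and Harnack inequality). Uniform ellipticity of $L^*$ is precisely what secures both, and the resulting spectral gap $\Lambda_2>\Lambda_1$ is what justifies discarding the $O(e^{-\Lambda_2 t})$ remainder in the final limit.
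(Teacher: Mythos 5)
Your proposal is correct and follows essentially the same route as the paper: expand $p(\bm{I},t)$ in the eigenfunctions of $L$ with absorbing boundary conditions, integrate to get $\mathcal{N}(t)$, and take the limit of $-\dot{\mathcal{N}}(t)/\mathcal{N}(t)$, which is dominated by the principal mode. The only difference is that you explicitly justify the existence, simplicity, and strict dominance of the real principal eigenvalue via compactness and Krein--Rutman, whereas the paper simply asserts these facts as known (citing the classical literature); your version is a more self-contained account of the same argument.
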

\begin{proof}
    The Fokker-Planck equation can be solved by separation of variables,
    \begin{equation}
        p(\bm{I},t)=\sum_{n=1}^\infty\phi_n(\bm{I})e^{-\lambda_nt},
    \end{equation}
    where $\lambda_n, \phi_n(\bm{I})$ are the eigenvalues and eigenfunctions of the Fokker-Planck operator, respectively,
    \begin{align}
        L\phi_n(\bm{I})&=-\lambda_n\phi_n(\bm{I}), &\text{for}& ~ \bm{I}\in D\\
        \phi_n(\bm{I})&=0. &\text{for}& ~ \bm{I}\in\partial D
    \end{align}
    It is known that $\lambda_1>0$ and $\phi_1(\bm{I})>0$ in $D$ and $\lambda_1$ is a simple eigenvalue. It follows that
    \begin{equation}
        \mathcal{N}(\bm{I},t)=\int_Dp(\bm{I},t)dV=\sum_{n=1}^\infty\int_D\phi_n(\bm{I})dVe^{-\lambda_nt},
    \end{equation}
    hence
    \begin{equation}
        \dot{\mathcal{N}}(\bm{I},t)=-\sum_{n=1}^\infty\lambda_n\int_D\phi_n(\bm{I})dVe^{-\lambda_nt},
    \end{equation}
    and
    \begin{equation}
        -\frac{\dot{\mathcal{N}}(\bm{I},t)}{\mathcal{N}(\bm{I},t)}=\frac{\sum_{n=1}^\infty\lambda_n\int_D\phi_n(\bm{I})dVe^{-\lambda_nt}}{\sum_{n=1}^\infty\int_D\phi_n(\bm{I})dVe^{-\lambda_nt}}.
    \end{equation}
    It follows that
    \begin{equation*}
        \lim_{t\to\infty}\|\bm{\lambda}_\pm(t)\|_1=\lim_{t\to\infty}-\frac{\dot{\mathcal{N}}(\bm{I},t)}{\mathcal{N}(\bm{I},t)}=\lambda_1.
    \end{equation*}
\end{proof}

\cref{prop:asymptoticfokkerplanck} provides the mathematical basis for the formulation of the event density flow, which is also why we also use $\beta_\pm$ to constitute the temporal EDF model in \cref{eq:temporaledf}.

\section{Algorithm}
This section provides some details of the optimization results in \cref{sec:algorithm} to reduce the complexity of our algorithm.

\subsection{Temporal EDF Model}
This section provides proofs required to perform optimization in \cref{sec:temporal}. First the expression of the event observation likelihood is proved.

\begin{proposition}
    Given a point pattern $(t_1,\ldots,t_n)$ on an observation interval $[0,T)$, the likelihood function is given by
    \begin{equation}
        \mathcal{L} = \left(\prod_{i=1}^n\lambda(t_i)\right)\exp\left(-\int_0^T\lambda(\tau)d\tau\right).
    \end{equation}
    Given a marked point pattern $((t_1,\kappa_1),\ldots,(t_n,\kappa_n))$ on $[0,T)\times\mathbb{M}$, the likelihood function is given by
    \begin{equation}\label{eq:supplikelihood}
        \mathcal{L} = \left(\prod_{i=1}^n\lambda(t_i,\kappa_i)\right)\exp\left(-\int_0^T\|\bm{\lambda}(\tau)\|_1d\tau\right).
    \end{equation}
\end{proposition}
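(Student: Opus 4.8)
The plan is to establish the unmarked formula first and then lift it to marks by repeating the argument with a mark-indexed intensity. I would read $\lambda(t)$ as the conditional intensity of the point process---the failure-rate object already introduced in \cref{eq:eventdensityflow}---so that $\lambda(t)\,dt$ is the probability of an event in $[t,t+dt)$ given the history up to $t$. The central identity I would lean on is the survival relation $S(t)=\exp(-\int_0^t\lambda(\tau)d\tau)$ for the probability of observing no event on $[0,t)$, which is exactly the denominator structure already appearing in \cref{eq:eventdensityflow,eq:inverserelation}.

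First I would factorize the joint density of the pattern $(t_1,\dots,t_n)$ by conditioning successively on the history: writing $t_0=0$, the density that the next event lands at $t_i$ given the previous one at $t_{i-1}$ is hazard times survival over the gap, $\lambda(t_i)\exp(-\int_{t_{i-1}}^{t_i}\lambda(\tau)d\tau)$, which is precisely the single-interval specialization of the exit-time density \cref{eq:inverserelation} restarted at $t_{i-1}$ via the Markov/renewal property. The final open interval, on which no further event is observed, contributes the pure survival factor $\exp(-\int_{t_n}^{T}\lambda(\tau)d\tau)$. Multiplying these $n+1$ factors, the hazards $\prod_{i=1}^n\lambda(t_i)$ pull out front while the chained exponentials telescope across $0<t_1<\dots<t_n<T$ into the single exponential $\exp(-\int_0^T\lambda(\tau)d\tau)$, giving the claimed form.

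As a self-contained justification of the per-event factor, useful if one does not wish to invoke \cref{eq:inverserelation} directly, I would discretize $[0,T)$ into $N$ cells of width $\delta=T/N$: each empty cell contributes $1-\lambda(t)\delta=\exp(-\lambda(t)\delta)+o(\delta)$ and each occupied cell contributes $\lambda(t_i)\delta$, so the Riemann sum in the exponent converges to $\int_0^T\lambda$ and the $\delta^n$ prefactor is exactly the volume element $dt_1\cdots dt_n$ of the reference measure against which the likelihood density is taken. For the marked case I would replace the scalar hazard by the mark-resolved intensity $\lambda(t,\kappa)$: an occupied cell of type $\kappa_i$ contributes $\lambda(t_i,\kappa_i)\delta$, while whether a cell stays empty is governed by the total intensity $\sum_{\kappa}\lambda(t,\kappa)=\|\bm{\lambda}(t)\|_1$, the same $\ell_1$ normalization already used in \cref{eq:remainingpopulation}. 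Repeating the telescoping then yields \cref{eq:supplikelihood}, with $\mathbb{M}=\{+1,-1\}$ recovering the event-polarity likelihood of \cref{eq:likelihood}.

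The main obstacle I anticipate is rigor in the factorization step for a genuinely history-dependent intensity: expressing the joint density as an ordered product of conditional densities is immediate for a renewal or Poisson structure, but for a general conditional intensity it requires either the infinitesimal partition limit, with uniform control of the $o(\delta)$ terms, or the compensator and exponential-formula machinery of point-process theory. I would keep the argument at the level of the partition limit, which suffices here because the EDF intensities in \cref{eq:temporaledf} are piecewise-constant between events and hence bounded and well behaved on each gap.
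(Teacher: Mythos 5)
Your proposal is correct and follows essentially the same route as the paper: factorizing the joint density into successive inter-event densities of hazard-times-survival form (the single-interval specialization of \cref{eq:inverserelation}) plus a terminal survival factor on $(t_n,T)$, and then telescoping the exponentials; the paper merely writes the terminal factor as $1-\int_{t_n}^T p(t_n,\tau)d\tau$ before integrating it down to the same survival exponential you write directly. Your supplementary partition-limit justification and the explicit treatment of the marked case via the total intensity $\|\bm{\lambda}(t)\|_1$ are consistent with, and slightly more detailed than, the paper's one-line remark that the marked case follows "accordingly."
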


\begin{proof}
    The likelihood function is the joint density function of all the points $(t_1,\ldots,t_n)$ in the observation period $[0,T)$.
    \begin{equation}
        \mathcal{L}=\prod_{i=1}^np(t_{i-1},t_i)\left(1-\int_{t_n}^Tp(t_n,\tau)d\tau\right),
    \end{equation}
    where $p(t_s,t_e)$ is the probability density of an event at $t_e$ evolving from $t_s$ and we set $t_0=0$ as the start of observation. From \cref{eq:inverserelation}, we have:
    \begin{equation}
        \begin{split}
            \mathcal{L}&=\prod_{i=1}^n\lambda(t_i)\exp\left(-\int_{t_{i-1}}^{t_i}\lambda(\tau)d\tau\right)\\
            &\quad\times\left(1-\int_{t_n}^T\lambda(t)\exp\left(-\int_{t_n}^t\lambda(\tau)d\tau\right)dt\right)\\
            &=\left(\prod_{i=1}^n\lambda(t_i)\right)\exp\left(-\int_0^{t_n}\lambda(\tau)d\tau\right)\\
            &\quad\times\left(1+\left.\exp\left(-\int_{t_n}^t\lambda(\tau)d\tau\right)\right|_{t_n}^T\right)\\
            &=\prod_{i=1}^np(t_{i-1},t_i)\left(1-\int_{t_n}^Tp(t_n,\tau)d\tau\right).
        \end{split}
    \end{equation}
    The marked cased can be established accordingly by observing that the last part of the likelihood should be replace by $\mathcal{N}(t_n,T)$ to denote the probability that no event occurs in $(t_n,T)$.
\end{proof}

Based on the above result, it's possible to prove the following propositions.
\begin{proposition}\label{prop:kernelnorm1}
    For an observed event pattern within $[0,W)$ of the same polarity $\{t_i,p\}$ and an nonnegative casual kernel as $\phi(t)$, if the density flow of correspond polarity is defined as:
    \begin{equation}
        \lambda_{pol}(t)=\sum_{t_i<t}c\phi(t-t_i),
    \end{equation}
    where $c>0$ is an arbitrary scale. Then the MLE of $c_{\mathrm{MLE}}\to1/\int_0^\infty\phi(s)ds$ as $W\to\infty$.
\end{proposition}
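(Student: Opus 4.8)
The plan is to exploit the closed-form likelihood from the preceding proposition together with the fact that the unknown scale $c$ enters the intensity multiplicatively. First I would write $\lambda_{pol}(t)=c\,g(t)$ with $g(t):=\sum_{t_i<t}\phi(t-t_i)$, which does not depend on $c$. Substituting into the single-polarity likelihood gives the log-likelihood
\begin{equation}
    \ln\mathcal{L}(c)=n\ln c+\sum_{i=1}^n\ln g(t_i)-c\int_0^W g(\tau)\,d\tau,
\end{equation}
where $n$ is the observed event count. Only the first and last terms depend on $c$; since $n\ln c$ is strictly concave and the last term is linear in $c$, $\ln\mathcal{L}$ is strictly concave, so its unique maximizer is obtained by setting $\partial_c\ln\mathcal{L}=0$, yielding $c_{\mathrm{MLE}}=n\big/\int_0^W g(\tau)\,d\tau$.

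The second step is to evaluate the normalizing integral. Interchanging the finite sum with the integral (justified by nonnegativity of $\phi$) gives
\begin{equation}
    \int_0^W g(\tau)\,d\tau=\sum_{i=1}^n\int_{t_i}^W\phi(\tau-t_i)\,d\tau=\sum_{i=1}^n\int_0^{W-t_i}\phi(s)\,ds,
\end{equation}
so that
\begin{equation}
    c_{\mathrm{MLE}}=\left(\frac{1}{n}\sum_{i=1}^n\int_0^{W-t_i}\phi(s)\,ds\right)^{-1}.
\end{equation}
The denominator is the empirical average of the truncated kernel masses $\int_0^{W-t_i}\phi$. For every event at a fixed time $t_i$, monotone convergence gives $\int_0^{W-t_i}\phi(s)\,ds\to\int_0^\infty\phi(s)\,ds$ as $W\to\infty$; since each truncated mass is bounded above by the (assumed finite) total mass $\int_0^\infty\phi$, the averaged denominator converges to $\int_0^\infty\phi(s)\,ds$, whence $c_{\mathrm{MLE}}\to 1/\int_0^\infty\phi(s)\,ds$.

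The main obstacle is making the averaging argument rigorous near the right boundary: events $t_i$ within the kernel's effective reach of $W$ contribute truncated masses strictly below $\int_0^\infty\phi$, and as $W$ grows new events keep entering this boundary zone. To control this I would show that the fraction of such events is asymptotically negligible. For a compactly supported kernel of width $h$ (as in the temporal EDF model) only events in $[W-h,W)$ are affected; under a bounded mean event rate their count is $O(1)$ while $n\to\infty$, so the boundary terms vanish in the average. For a decaying but unbounded-support kernel I would instead split the sum at a slowly growing cutoff $W-M$, bound the tail mass $\int_M^\infty\phi$ by integrability of $\phi$, observe that the $O(M)$ boundary events are negligible relative to $n$, and then let $M\to\infty$ after $W$. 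This converts the heuristic limit into a clean $\varepsilon$-argument and completes the proof.
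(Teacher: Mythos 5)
Your proof is correct and follows essentially the same route as the paper's: maximize the likelihood in the multiplicative scale $c$ to obtain $c_{\mathrm{MLE}}=n\big/\sum_{i}\int_0^{W-t_i}\phi(s)\,ds$, then let $W\to\infty$. Your additional care in controlling the truncated kernel masses near the right boundary is in fact more rigorous than the paper's proof, which simply asserts the limit without justifying that the boundary events are asymptotically negligible.
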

\begin{proof}
    Let the number of observed events till $W$ is $n(W)$, then the likelihood is:
    \begin{equation}
        \begin{split}
            \mathcal{L}&=\left(\prod_{i=1}^{n(W)}\lambda(t_i)\right)\exp\left(\int_0^W\lambda(\tau)d\tau\right)\\
             &=\left(\prod_{i=1}^{n(W)}\sum_{j=1}^ic\phi(t_i-t_j)\right)\exp\left(\sum_{i=1}^{n(W)}\int_0^Wc\phi(\tau-t_i)d\tau\right)\\
             &=\Phi c^{n(W)}\exp\left(c\sum_{i=1}^{n(W)}\int_0^{W-t_i}\phi(s)ds\right).
        \end{split}
    \end{equation}
    By maximizing this likelihood with respect to $c$, we have
    \begin{equation}
        c_{\text{MLE}}=\frac{n(W)}{\sum_{i=1}^{n(W)}\int_0^{W-t_i}\phi(\tau)d\tau}\to\frac{1}{\int_0^\infty\phi(\tau)d\tau},
    \end{equation}
    as $W\to\infty$.
\end{proof}

\begin{proposition}
    Given an observed event pattern within $[0,W)$ of $\{t_i,p_i\}$ and density kernel as $\phi(t)$, if the number of false event determined by $\psi(ts)pol\leq0$ is $N_{pol}$ and the time span of $\pm\psi(t)\leq0$ is $L_\pm\leq W$, then the false event rate maximizing the observation likelihood is:
    \begin{equation}
        \beta_{\pm,\text{MLE}} = \frac{N_\pm}{L_\pm}.
    \end{equation}
\end{proposition}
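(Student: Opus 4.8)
The plan is to exploit the additive structure of the log-likelihood in \cref{eq:likelihood} together with the piecewise definition of the temporal EDF model in \cref{eq:temporaledf}, so as to isolate the dependence on $\beta_+$ (and symmetrically $\beta_-$) and reduce the estimation to the textbook maximum-likelihood rate of a homogeneous Poisson process.

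First I would substitute \cref{eq:temporaledf} into \cref{eq:likelihood}. By definition the positive rate satisfies $\lambda_+(t)=\psi(t)$ on $\{\psi(t)>0\}$ and $\lambda_+(t)=\beta_+$ on $\{\psi(t)\le 0\}$, a set whose measure within $[0,W)$ is $L_+$. Hence in the observation term $\sum_i\ln\lambda_{p_i}(t_i)$, the only summands carrying a $\beta_+$ dependence are those from positive events occurring while $\psi(t_i)\le 0$; each contributes $\ln\beta_+$, and there are exactly $N_+$ of them by definition. In the integral term $-\int_0^W[\lambda_+(\tau)+\lambda_-(\tau)]d\tau$, the part of $\lambda_+$ equal to $\beta_+$ integrates to $\beta_+ L_+$. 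Collecting terms, the $\beta_+$-dependent part of $\ln\mathcal{L}$ is
\begin{equation}
    \ell_+(\beta_+)=N_+\ln\beta_+-\beta_+ L_++\mathrm{const},
\end{equation}
and the same argument on $\{\psi(t)\ge0\}$, where $\lambda_-(t)=\beta_-$, yields $\ell_-(\beta_-)=N_-\ln\beta_--\beta_- L_-+\mathrm{const}$.

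The key observation is that $\beta_+$ and $\beta_-$ decouple: $\beta_+$ enters $\ln\mathcal{L}$ only through $\lambda_+$ on $\{\psi\le0\}$ and $\beta_-$ only through $\lambda_-$ on $\{\psi\ge0\}$, so $\ell_+$ and $\ell_-$ share no variables and may be maximized independently (and independently of the signal parameters $\alpha,h$ entering $\psi$). Each $\ell_\pm$ is strictly concave on $(0,\infty)$, so setting $\partial\ell_\pm/\partial\beta_\pm=N_\pm/\beta_\pm-L_\pm=0$ gives the unique global maximizer $\beta_{\pm,\mathrm{MLE}}=N_\pm/L_\pm$.

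This is essentially a bookkeeping exercise rather than a deep argument, so there is no genuine obstacle; the one point demanding care is the accounting of the two regions. Because the model assigns $(\lambda_+,\lambda_-)=(\beta_+,\beta_-)$ on the set $\{\psi=0\}$, the regions $\{\psi\le0\}$ and $\{\psi\ge0\}$ overlap there, so $L_+$ and $L_-$ both include the measure of $\{\psi=0\}$. This overlap is harmless for the decoupling since it concerns distinct parameters, but I would state explicitly that $N_+$, $N_-$, $L_+$, and $L_-$ are counted with these (possibly overlapping) conventions, so that each $\beta_\pm$-region is fully and correctly accounted for.
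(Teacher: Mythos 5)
Your proposal is correct and follows essentially the same route as the paper's proof: the paper likewise isolates the $\beta_\pm$-dependent factor of the likelihood as $\beta_\pm^{N_\pm}\exp(-\beta_\pm L_\pm)$ and maximizes it, which is exactly your $\ell_\pm(\beta_\pm)=N_\pm\ln\beta_\pm-\beta_\pm L_\pm$ in log form. Your version merely spells out the bookkeeping (which terms of the log-likelihood carry $\beta_\pm$, the decoupling of $\beta_+$ from $\beta_-$, and the overlap on $\{\psi=0\}$) that the paper leaves implicit.
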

\begin{proof}
    The contribution of $\beta_\pm$ to the final likelihood is:
    \begin{equation}
        \mathcal{L_{\beta_\pm}}=\beta_\pm^{N_\pm}\exp\left(-\beta_\pm L_\pm\right).
    \end{equation}
    The by maximizing this likelihood, we have
    \begin{equation}
        \beta_{\pm,\text{MLE}}=\frac{N_\pm}{L_\pm}.
    \end{equation}
\end{proof}

\begin{proposition}
    If only one event of $(ts,pol)$ is observed with $\alpha T=1$, then the MLE of the density flow with corresponding polarity is:
    \begin{equation}
        \lambda_{pol}(t)=\begin{cases}
            \frac{1}{\gamma+t-ts}, & \text{if $0\leq t-ts<(e-1)\gamma$}\\
            0. & \text{otherwise}
        \end{cases}
    \end{equation}
\end{proposition}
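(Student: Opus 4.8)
The plan is to reduce the stated result to a one-parameter posterior maximization and then read off the density flow at the query time $t$ from the optimizer. Write $u = t - t_s \ge 0$ for the elapsed time since the single event. Under the normalization $\alpha h = 1$ (the condition written $\alpha T = 1$, with $T$ the bandwidth $h$), the rectangular-kernel model of \cref{eq:temporalkde} produces the piecewise-constant flow $\lambda_{\mathrm{pol}}(\tau) = 1/h$ for $\tau \in [t_s, t_s + h)$ and $0$ afterwards, so the only free quantity is $h$ (equivalently the rate $\lambda = 1/h$), and the value to report is $\lambda_{\mathrm{pol}}(t)$, which equals $1/h$ when $h > u$ and $0$ when $h \le u$.

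First I would assemble the single-event posterior from \cref{eq:likelihood} together with the prior $f \propto (\gamma/h^2)e^{-\gamma/h}$, focusing on the observed polarity and setting the false rate aside as the statement does. The log-likelihood contributes the event factor $\ln\lambda_{\mathrm{pol}}(t_s) = \ln\alpha$ and the compensator $-\int_{t_s}^t \lambda_{\mathrm{pol}}$ (the intensity vanishes before $t_s$). The decisive simplification is to reparametrize by the rate $\lambda = 1/h$: the change of variables carries the prior $(\gamma/h^2)e^{-\gamma/h}\,dh$ to the exponential prior $e^{-\gamma\lambda}\,d\lambda$, so $\gamma$ acquires the meaning of a prior mean interval, consistent with the main text.

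Next I would split into two regimes according to whether the query time lies inside the kernel support. In the active regime $h > u$ (equivalently $\lambda < 1/u$) the compensator is the partial mass $\lambda u$, giving the log-posterior $\ln\lambda - (u+\gamma)\lambda$ up to an additive constant; its stationary point is $\lambda^\star = 1/(\gamma + u)$, which is the claimed nonzero value and always obeys $\lambda^\star < 1/u$. In the expired regime $h \le u$ the kernel has already integrated to its full unit mass before $t$, so the compensator equals $1$, the query intensity is $0$, and the log-posterior becomes $\ln\lambda - 1 - \gamma\lambda$, maximized at $\lambda = 1/\gamma$.

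Finally I would compare the two maximized log-posteriors, which share the same constants: the active branch attains $-\ln(\gamma+u) - 1$ and the expired branch attains $-\ln\gamma - 2$, so the nonzero estimate dominates exactly when $\ln\frac{\gamma}{\gamma+u} > -1$, i.e.\ $u < (e-1)\gamma$. This comparison is the crux of the argument and the step most prone to error: one must track the compensator correctly in each regime (partial versus full kernel mass) and check that the expired-regime optimizer $1/\gamma$ is admissible near the threshold, which it is since $(e-1)\gamma > \gamma$; the transcendental balance is precisely what produces the constant $e-1$. Assembling the two branches gives the stated piecewise flow, with a jump from $1/(e\gamma)$ down to $0$ at $u = (e-1)\gamma$.
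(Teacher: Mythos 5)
Your proof is correct and follows essentially the same route as the paper's: the same change of variables turning the prior into an exponential in the rate, the same two-regime split on whether the query time lies inside the kernel support, the same stationary points $1/(\gamma+u)$ and $1/\gamma$, and the same transcendental comparison yielding the $(e-1)\gamma$ threshold. The only difference is that the paper explicitly handles the sub-case $u<\gamma$, where the expired-regime optimizer $1/\gamma$ is inadmissible and the constrained boundary value $\lambda=1/u$ must be shown dominated (via $e^{x}>1+x$), whereas you dispose of it implicitly by comparing against the unconstrained upper bound $-\ln\gamma-2$ — which suffices, since the active branch beats even that bound whenever $u<(e-1)\gamma$.
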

\begin{proof}
    Without loss of generality, let $ts=0$ and $pol=+1$, then $\psi(t)=\alpha[u(t-\frac{1}{\alpha})-u(t))]$ where $u(t)$ is the unit step function. When we observe the event at time $t$, the likelihood is
    \begin{equation}
        \mathcal{L}=\gamma\exp(-\alpha\gamma)\times\begin{cases}
            \alpha\exp(-\alpha t), & \text{if $\frac{1}{\alpha}> t$}\\
            \alpha\exp(-1). & \text{otherwise}
        \end{cases}
    \end{equation}
    When $\frac{1}{\alpha}>t$, the $\alpha$ that maximizes the likelihood is
    \begin{equation}
        \alpha_1=\frac{1}{t+\gamma}<\frac{1}{t}.
    \end{equation}
    Then the likelihood is
    \begin{equation}
        \mathcal{L}_1=\frac{\gamma}{t+\gamma}\exp(-1).
    \end{equation}
    When $\frac{1}{\alpha}\leq t$, the $\alpha$ that maximizes the likelihood is
    \begin{equation}
        \alpha_2=\begin{cases}
            \frac{1}{\gamma}, &\text{if $\gamma<t$}\\
            \frac{1}{t}. &\text{otherwise}
        \end{cases}
    \end{equation}
    Then the likelihood is:
    \begin{equation}
        \mathcal{L}_2=\begin{cases}
            \exp(-2), &\text{if $\gamma<t$}\\
            \frac{\gamma}{t}\exp(-\frac{\gamma}{t})\exp(-1). &\text{otherwise}
        \end{cases}
    \end{equation}
    It's easy to prove that $\frac{\gamma}{t+\gamma}\exp(-1)>\frac{\gamma}{t}\exp(-\frac{\gamma}{t})\exp(-1)$, so the only chance that $\frac{1}{\alpha}\leq t$ is
    \begin{equation}
        \exp(-2)\geq\frac{\gamma}{t+\gamma}\exp(-1)\Rightarrow t\geq(e-1)\gamma.
    \end{equation}
    Since $\lambda(t)=0$ when $t\geq\frac{1}{\alpha}$, $\lambda_{\text{MLE}}(t)=0$ when $t\geq(e-1)\gamma$. Otherwise, $\lambda_{\text{MLE}}=\alpha_1=\frac{1}{t+\gamma}$.
\end{proof}

\subsection{The Spatial EDF Model}
This section provides proofs for the uniqueness of the output mapping and the sparsity-preserving property for the spatial model in \cref{sec:spatial}. To start with, we define the problem in a general case.

\begin{definition}
    A convex optimization problem is defined as finding the minimum of a convex function where the feasible set is also a convex set.
\end{definition}

\begin{proposition}\label{prop:suppcvxsol}
    A convex optimization problem has zero, one, or many solutions, which must form a convex set called optimal set.
\end{proposition}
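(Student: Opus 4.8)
The plan is to separate the two claims in the statement: the cardinality trichotomy (``zero, one, or many'') and the structural claim (the optimal set is convex). The first is immediate once the object is named correctly, so the real work is the second. First I would fix notation: write $f$ for the convex objective, $\mathcal{C}$ for the convex feasible set, $p^\star=\inf_{x\in\mathcal{C}}f(x)$ for the optimal value, and define the \emph{optimal set} as $\mathcal{X}^\star=\{x\in\mathcal{C}:f(x)=p^\star\}$. With this definition the trichotomy is trivial: any subset of $\mathbb{R}^n$ contains zero elements, exactly one element, or more than one, so nothing needs proving there beyond pointing it out.

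The key step is to show $\mathcal{X}^\star$ is convex. I would take arbitrary $x_1,x_2\in\mathcal{X}^\star$ and $\theta\in[0,1]$ and argue that the convex combination $x_\theta=\theta x_1+(1-\theta)x_2$ again lies in $\mathcal{X}^\star$. Feasibility $x_\theta\in\mathcal{C}$ follows directly from convexity of $\mathcal{C}$. Optimality then follows from a two-sided bound: convexity of $f$ gives
\begin{equation}
    f(x_\theta)\le \theta f(x_1)+(1-\theta)f(x_2)=\theta p^\star+(1-\theta)p^\star=p^\star,
\end{equation}
while $x_\theta\in\mathcal{C}$ together with the definition of $p^\star$ as an infimum forces $f(x_\theta)\ge p^\star$. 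Combining the two inequalities yields $f(x_\theta)=p^\star$, i.e.\ $x_\theta\in\mathcal{X}^\star$, establishing convexity.

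Finally I would dispose of the degenerate cases to justify the word ``zero''. If $p^\star=-\infty$ or the infimum is not attained, then $\mathcal{X}^\star=\emptyset$, and the convexity argument holds vacuously; this is exactly the ``zero solutions'' case. The ``one'' and ``many'' cases are realized, respectively, by a strictly convex objective (unique minimizer) and by an objective that is constant along a segment of $\mathcal{C}$. Honestly, there is no genuine obstacle here---the statement is a textbook consequence of the definitions \cite{boyd2004convex}. The only point requiring mild care is keeping the definition of $\mathcal{X}^\star$ consistent when the infimum is unattained, so that the empty-set case is genuinely covered rather than silently excluded.
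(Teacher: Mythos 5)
Your proof is correct and in substance identical to the paper's: the paper identifies the optimal set as the sublevel set $\{\mathbf{x}\in C : f(\mathbf{x})\le \inf_{C} f\}$ and invokes the convexity of sublevel sets of convex functions, which is precisely the two-point inequality you verify directly for $x_\theta=\theta x_1+(1-\theta)x_2$. Your extra care about the empty or unattained case is harmless and slightly more complete than the paper's version, but does not change the argument.
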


\begin{proof}
    Let $C$ be the feasible set and $f(\mathbf{x})$ be the target convex function. Then the convex optimization problem is the problem of finding some $\mathbf{x}^*\in C$ attaining $\inf\{f(\mathbf{x}):\mathbf{x}\in C\}$. The the solution space can be defined as the sublevel set $\{\mathbf{x}:f(\mathbf{x})\leq\inf\{f(\mathbf{x}),\mathbf{x}\in C\},\mathbf{x}\in C\}$. Since the sublevel sets of convex functions are convex, the solution space must be a convex set.
\end{proof}

\begin{definition}\label{def:suppl1problem}
    A finite dimensional linear L1 norm minimization problem is defined as
    \begin{align}
        \underset{\mathbf{k}}{\operatorname{minimize}} \qquad & \lVert\mathbf{a}-\mathbf{k}\mathbf{B}\rVert_1\\
        \operatorname{subject~to}\qquad &\mathbf{a}\in\mathbb{R}^n\\
        &\mathbf{k}\in\mathbb{R}^m\\
        &\mathbf{B}\in\mathbb{R}^{m\times n}.
    \end{align}

    The optimal set is denoted as $\mathcal{O}_\mathbf{a}$.
\end{definition}

\begin{definition}
    The characteristic linear function of the optimization problem in \cref{def:suppl1problem} is defined as 
    \begin{equation}
        \{f:f(\mathbf{k})=\mathbf{f}\cdot\mathbf{k},|f(\mathcal{O}_\mathbf{a})|=1,\forall \mathbf{a}\in\mathbb{R}^n\},
    \end{equation}
    where $\mathbf{f}\in\mathbb{R}^m$ is the representation of $f$.
\end{definition}

Now that we have the required definitions, some important results relating to the characteristic linear function space are provided below.
\begin{proposition}
    The characteristic linear function space of the optimization problem in \cref{def:suppl1problem} is a linear space.
\end{proposition}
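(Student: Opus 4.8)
The plan is to prove that the characteristic linear functions form a linear space by identifying each such function $f$ with its representing vector $\mathbf{f}\in\mathbb{R}^m$ (the map $\mathbf{f}\mapsto f$, with $f(\mathbf{k})=\mathbf{f}\cdot\mathbf{k}$, being a linear bijection between $\mathbb{R}^m$ and the linear functionals on $\mathbb{R}^m$) and then verifying the three subspace axioms: the collection contains the zero functional, and is closed under addition and under scalar multiplication. The starting point is to restate the defining condition $|f(\mathcal{O}_\mathbf{a})|=1$ in a workable form. Since $f(\mathcal{O}_\mathbf{a})=\{f(\mathbf{k}):\mathbf{k}\in\mathcal{O}_\mathbf{a}\}$, demanding this image to be a singleton for every $\mathbf{a}\in\mathbb{R}^n$ is exactly the assertion that $f$ is constant on each optimal set $\mathcal{O}_\mathbf{a}$. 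So I would recast the claim as: the set of linear functionals that are constant on every optimal set of \cref{def:suppl1problem} is a linear space.

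The observation that makes closure immediate is that the optimal set $\mathcal{O}_\mathbf{a}$ is fixed by the data $(\mathbf{a},\mathbf{B})$ of the minimization and does not depend on the functional being evaluated. Fixing $\mathbf{a}$, if $f_1$ and $f_2$ are constant on $\mathcal{O}_\mathbf{a}$ with respective values $u_\mathbf{a}$ and $w_\mathbf{a}$, then $(f_1+f_2)(\mathbf{k})=u_\mathbf{a}+w_\mathbf{a}$ and $(cf_1)(\mathbf{k})=cu_\mathbf{a}$ for every $\mathbf{k}\in\mathcal{O}_\mathbf{a}$, each a single value; since this holds for all $\mathbf{a}$, and since $f_1+f_2$ and $cf_1$ are again linear functionals (represented by $\mathbf{f}_1+\mathbf{f}_2$ and $c\mathbf{f}_1$), closure under both operations follows. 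A cleaner structural phrasing, which I would also record, is that $f$ is characteristic precisely when $\mathbf{f}$ is orthogonal to the whole difference set $\bigcup_{\mathbf{a}}(\mathcal{O}_\mathbf{a}-\mathcal{O}_\mathbf{a})$; an orthogonal complement is automatically a subspace, yielding the result in one stroke.

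The single point requiring genuine care is the zero functional, which must belong to the set: for $\mathbf{f}=\mathbf{0}$ we have $f(\mathcal{O}_\mathbf{a})=\{0\}$, a singleton, but only provided $\mathcal{O}_\mathbf{a}$ is nonempty. I would therefore first confirm that every optimal set is nonempty. The attainable vectors $\mathbf{k}\mathbf{B}$ range over the row space of $\mathbf{B}$, a closed subspace of $\mathbb{R}^n$, and $\mathbf{v}\mapsto\lVert\mathbf{a}-\mathbf{v}\rVert_1$ is continuous and coercive on it, so a minimizer exists by Weierstrass; $\mathcal{O}_\mathbf{a}$ is then the nonempty affine preimage of that minimizer under $\mathbf{k}\mapsto\mathbf{k}\mathbf{B}$. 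This also meshes with \cref{prop:suppcvxsol}, which already guarantees each $\mathcal{O}_\mathbf{a}$ is convex.

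I expect the main obstacle to be bookkeeping rather than mathematical depth. The substantive content is the reinterpretation of $|f(\mathcal{O}_\mathbf{a})|=1$ as constancy together with the fact that the optimal set is the same independently of $f$, after which closure is automatic. The subtlety worth making explicit is that the cardinality-one condition equals constancy only because the optimal sets are nonempty, so I would keep that dependence visible to justify including the zero functional, and I would fix the canonical linear identification $\mathbf{f}\leftrightarrow f$ up front so that the phrase ``linear space'' is unambiguous.
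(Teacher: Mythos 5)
Your proposal is correct and follows essentially the same route as the paper's proof, which likewise observes that a linear combination of functionals constant on each (fixed, $f$-independent) optimal set $\mathcal{O}_\mathbf{a}$ is again constant there. Your additional verification that each $\mathcal{O}_\mathbf{a}$ is nonempty (so that the zero functional qualifies and $|f(\mathcal{O}_\mathbf{a})|=1$ genuinely means constancy) is a detail the paper leaves implicit, and is a reasonable point to make explicit.
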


\begin{proof}
    Let $\forall\mathbf{k}\in\mathcal{O}_\mathbf{a}$ and $f,g$ be the characteristic linear functions, then $\forall \alpha,\beta\in\mathbb{R}$, $(\alpha f+\beta g)(\mathbf{k})=\alpha f(\mathbf{k})+\beta g(\mathbf{k})$ is also unique, so $\alpha f+\beta g$ is also a characteristic linear function.
\end{proof}

\begin{proposition}\label{prop:suppuniqueness}
    If the matrix $\mathbf{B}=(b_{ij})_{1\leq i\leq m,1\leq j\leq n}$ contains at most one nonzero element other than one column, $\sum_{j=1}^n(-1)^{n_j}b_{ij}\neq 0$ for $\forall i\in\{1,2,\ldots,m\}$ and $\forall n_j\in\{0,1\}$, let it be the $j$-th column, then the linear function $f(\mathbf{k})=\sum_{i=1}^mb_{ij}k_i$ is a characteristic linear function of the optimization problem of \cref{def:suppl1problem}.
\end{proposition}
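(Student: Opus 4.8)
The plan is to read the word \emph{characteristic} concretely: $f(\mathbf{k})=\sum_{i=1}^m b_{ij}k_i=(\mathbf{k}\mathbf{B})_j$ is exactly the reconstruction value at the special column $j$, and the claim is that this scalar is the same for every minimizer, i.e.\ constant on $\mathcal{O}_\mathbf{a}$ for all $\mathbf{a}$. Since $\mathcal{O}_\mathbf{a}$ is convex by \cref{prop:suppcvxsol} and $f$ is linear, $f$ can fail to be constant only if $\mathcal{O}_\mathbf{a}$ contains a segment along some direction $\mathbf{d}$ with $f(\mathbf{d})=\sum_i b_{ij}d_i\neq0$. I would therefore argue by contradiction: suppose such a ``flat'' optimal direction $\mathbf{d}$ exists, choose $\mathbf{k}^\ast$ in the relative interior of that segment so that $\mathbf{k}^\ast\pm t\mathbf{d}$ remains optimal for small $t>0$, and aim to conclude $\mathbf{d}=0$.

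The first ingredient is a vanishing–directional–derivative computation. Writing the residual $e_\ell=a_\ell-(\mathbf{k}^\ast\mathbf{B})_\ell$ and the perturbation $\rho_\ell=(\mathbf{d}\mathbf{B})_\ell$, flatness of $F(\mathbf{k})=\lVert\mathbf{a}-\mathbf{k}\mathbf{B}\rVert_1$ along the segment forces $F'(\mathbf{k}^\ast;\mathbf{d})=F'(\mathbf{k}^\ast;-\mathbf{d})=0$. Differentiating each $\lvert e_\ell\mp t\rho_\ell\rvert$ and adding the two one-sided derivatives, the terms with $e_\ell\neq0$ cancel and one is left with $2\sum_{\ell:\,e_\ell=0}\lvert\rho_\ell\rvert=0$, so $\rho_\ell=0$ for every column whose residual vanishes. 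The structural fact I would lean on here is that every column $\ell\neq j$ has at most one nonzero in $\mathbf{B}$, so $\rho_\ell=b_{i\ell}d_i$ is governed by a single coordinate $d_i$ of $\mathbf{d}$; a zero residual on such a ``private'' column will later be convertible into $d_i=0$.

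The second ingredient is the $\ell_1$ optimality condition together with the nondegeneracy hypothesis. From $\mathbf{0}\in\partial F(\mathbf{k}^\ast)$ there is a single consistent sign selection $s_\ell\in\mathrm{Sign}(e_\ell)$ (with $s_\ell=\pm1$ forced whenever $e_\ell\neq0$, and $s_\ell\in[-1,1]$ free when $e_\ell=0$) satisfying $\sum_\ell b_{i\ell}s_\ell=0$ for every row $i$. If every column in the support of row $i$ had a nonzero residual, all those $s_\ell$ would be $\pm1$ and the row sum would be a signed combination $\sum_\ell(-1)^{n_\ell}b_{i\ell}$, which is nonzero by hypothesis — a contradiction. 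Hence for each row $i$ at least one column in its support carries a \emph{zero} residual.

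Finally I would combine the two ingredients. If $f(\mathbf{d})=\rho_j\neq0$, the first step forces $e_j\neq0$ (a nonzero $\rho_j$ cannot sit on a zero-residual column). Then the zero-residual column guaranteed for each row $i$ cannot be the shared column $j$, so it is one of that row's private off-center columns $\ell$; there $\rho_\ell=b_{i\ell}d_i=0$ with $b_{i\ell}\neq0$ yields $d_i=0$. Ranging over all rows gives $\mathbf{d}=0$, contradicting $\rho_j\neq0$, which establishes the claim. I expect the main obstacle to be the careful handling of the set-valued $\ell_1$ subgradient: one must ensure the selection $s_j$ at the shared column $j$ is the \emph{same} across all rows, and recognize that the nondegeneracy condition is precisely what forbids an all-$\pm1$ selection, while the single-nonzero structure of the off-center columns is what lets ``zero residual'' propagate into ``$d_i=0$.''
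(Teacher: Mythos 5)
Your proof is correct, but it follows a genuinely different route from the paper's. The paper argues by induction on the number of rows $m$: for fixed $k_i$, $i<M$, the objective is piecewise linear in $k_M$ with no flat pieces (by the nondegeneracy hypothesis), so the optimal $k_M$ sits at a turning point, which is either a ``private'' breakpoint $a_\ell/b_{M,\ell}$ of an off-center column or the breakpoint of the shared column $j$; a case analysis combined with convexity of the optimal set (\cref{prop:suppcvxsol}) then propagates uniqueness of $\sum_i b_{ij}k_i$ up the induction. You instead give a direct, non-inductive argument via the two standard first-order facts for $\ell_1$ problems: (i) a flat optimal direction $\mathbf{d}$ must satisfy $\rho_\ell=(\mathbf{d}\mathbf{B})_\ell=0$ on every zero-residual column (from summing the two one-sided directional derivatives), and (ii) the subgradient condition $0\in\partial F(\mathbf{k}^\ast)$ plus the hypothesis $\sum_\ell(-1)^{n_\ell}b_{i\ell}\neq0$ forces each row's support to contain a zero-residual column. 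The single-nonzero structure of the off-center columns then converts (i) and (ii) into $d_i=0$ for every row, exactly as you say, and the edge case where a row's support is $\{j\}$ alone is also handled since it yields an immediate contradiction with $e_j\neq0$. Both proofs lean on the same two structural ingredients (nondegeneracy forbids all-$\pm1$ sign patterns; private columns pin down individual coordinates), but your subgradient formulation packages them more cleanly: it avoids the paper's induction, its turning-point enumeration, and the somewhat delicate ``holds almost everywhere in $\alpha$'' step at the end of the paper's argument, at the cost of assuming familiarity with the subdifferential calculus for sums of absolute values. The one point you flag as an obstacle --- that the selection $s_j$ must be shared across rows --- is in fact automatic from the Minkowski-sum structure of $\partial F$, so there is no gap there.
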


\begin{proof}
     Use induction. When $m=1$, the target function $g_\mathbf{a}(k)=\sum_{i=1}^n|a_i-kb_i|$ is a convex piecewise linear function, and the slope of each piece $\sum_{i=1}^n(-1)^{n_i}b_i\neq 0$, so $g_\mathbf{a}$ is a strictly convex function and the solution is unique. Linear functions $kb_i,\forall i\in\{1,2,\ldots,m\}$ are the characteristic linear functions.

     Assuming the result holds for $m<M$, let $b_{M,j}\neq 0$. When $k_i,i<M$ are fixed, $f(\mathbf{k})$ is a strictly convex piecewise linear function of variable $k_M$. The best $k_M$ could be any turning point as $k_i,i<M$ varies. So $k_M\in\{a_l/b_{i,l}:b_{i,l}\neq 0,l\neq j\}=S_M$ or $k_M=(a_j-\sum_{i=1}^{M-1}k_ib_{ij})/b_{M,j}$.

     If $k_M$ in the optimal set $\mathcal{O}_\mathbf{a}$ is unique, when $k_M\in S_M$, we have by induction that $\sum_{i=1}^{M-1}k_ib_{ij}$ is unique, so $\sum_{i=1}^Mk_ib_{ij}$ is unique. Otherwise $k_M=(a_j-\sum_{i=1}^{M-1}k_ib_{ij})/b_{M,j}$, then $\sum_{i=1}^Mk_ib_{ij}=a_j$. If $k_M=k_{M1},k_M=k_{M2},k_{M1}\neq k_{M2}$ are in $\mathcal{O}_\mathbf{a}$, select $\mathbf{k}_1=(k_{11},k_{21},\ldots,k_{M1})\in\mathcal{O}_\mathbf{a}$ and $\mathbf{k}_2=(k_{12},k_{22},\ldots,k_{M2}) \in \mathcal{O}_\mathbf{a}$, then $\mathbf{k}_\alpha=\alpha\mathbf{k}_1+(1-\alpha)\mathbf{k}_2\in\mathcal{O}_\mathbf{a},\forall0\leq\alpha\leq1$ because $\mathcal{O}_\mathbf{a}$ is a convex set. When $\alpha$ is selected such that $\alpha k_{M1}+(1-\alpha)k_{M2}\notin S_M$, $k_M=(a_j-\sum_{i=1}^{M-1}k_ib_{ij})/b_{M,j}$ or $\sum_{i=1}^Mk_ib_{ij}=a_j$. The equality holds almost everywhere for $\alpha\in[0,1]$, or $\sum_{i=1}^m(\alpha k_{i1}+(1-\alpha)k_{i2})b_{ij}\overset{a.e.}{=}a_j,\forall\alpha\in[0,1]$. If $\exists\alpha\in[0,1]$ such that $\alpha k_{M1}+(1-\alpha)k_{M2}\in S_M$, because $\sum_{i=1}^Mk_ib_{ij}$ is unique when $k_M\in S_M$, so $\sum_{i=1}^M(\alpha k_{i1}+(1-\alpha)k_{i2})=a_j$ holds everywhere. So the linear function $f(\mathbf{k})=\sum_{i=1}^Mk_ib_{ij}$ is the characteristic linear function of the optimization problem \cref{def:suppl1problem}.
\end{proof}

\begin{lemma}
    For a local density flow map around an empty pixel:
    \begin{equation}
        A=\begin{bmatrix}
        a_{11} & a_{12} & a_{13}\\
        a_{21} & 0 & a_{23}\\
        a_{31} & a_{22} & a_{33}
        \end{bmatrix},
    \end{equation}
    if the surrounding densities are also sparse:
    \begin{equation}
        0=a_{11}a_{33}=a_{12}a_{32}=a_{13}a_{31}=a_{21}a_{23},
    \end{equation}
    then the filtering output defined by \cref{eq:spatialbasis,eq:spatialopt,eq:spatialcontrib,eq:spatialout} is $D=0$, \ie, no density is diffused to the center pixel.
\end{lemma}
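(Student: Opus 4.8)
The plan is to collapse the $3\times 3$ fit of \cref{eq:spatialopt} to a scalar statement about the central readout and to prove, for the sparsity-promoting $L_1$ instance of the objective (the case the paper ties to sparse reconstruction), that this readout is forced to $0$. Writing $D=\sum_{i=1}^4 k_{i}b_i$ for the center value produced by \cref{eq:spatialout}, I would first record the support bookkeeping of \cref{eq:spatialbasis}: the eight off-center cells of the patch partition into four opposite pairs, the two unit ``wings'' of a single basis $B_i$, while the center cell collects the weights $b_i$. Hence the reconstruction $\Lambda_\pm=\sum_i k_i B_i$ of \cref{eq:spatialcontrib} equals $k_i$ at both wings of direction $i$ and equals $D$ at the center, and---because the wing supports are disjoint---the entrywise residual against $A$ splits cleanly as the center term $|D-0|$ plus, for each direction, the pair term $|k_i-u_i|+|k_i-v_i|$, where $u_i,v_i\ge 0$ are the two opposite wing densities of $A$.

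Next I would insert the two hypotheses. The empty-pixel assumption sets the center target to $0$, so the center contributes exactly $|D|\ge 0$. The sparsity hypothesis is precisely that each of the four opposite products $u_i v_i$ vanishes; with nonnegative densities this forces at least one wing of every direction to be $0$. Setting $c_i=u_i+v_i$ for the lone possibly-nonzero wing, the pair term becomes $|k_i-c_i|+|k_i|$, and the triangle inequality yields the key estimate $|k_i-c_i|+|k_i|\ge c_i$ with equality exactly on $k_i\in[0,c_i]$. The decisive feature is that this equality interval \emph{contains} $0$; had both wings been nonzero the equality set would be bounded away from $0$ and the conclusion would fail, which is exactly why the sparsity condition is needed.

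Summing over the four directions gives $\|\Lambda_\pm-A\|_1\ge |D|+\sum_i c_i\ge \sum_i c_i$. The trivial choice $\mathbf k=0$ attains the value $\sum_i c_i$ (its center readout is $D=0$ and each coordinate lies in $[0,c_i]$), so it is a global minimizer and the optimum equals $\sum_i c_i$; any minimizer $\mathbf k^\ast$ must make both inequalities tight, and tightness of the second gives $|D^\ast|=0$. Thus $D=0$ at \emph{every} optimizer of \cref{eq:spatialopt}, so the value returned by the solver is independent of any tie-breaking and we do not even need to invoke the optimal-set uniqueness of \cref{prop:suppuniqueness}. By \cref{eq:spatialout} the updated central flow is $\lambda^\ast_\pm(t,x,y)=D=0$, i.e. no density is diffused into the empty pixel.

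The hard part is the coupling hidden in the center term: $D=\sum_i k_i b_i$ links all four coefficients, so the directions cannot be optimized independently. I would dispose of this by bounding the center contribution trivially from below by $|D|\ge 0$ and noting that $\mathbf k=0$ simultaneously saturates every per-direction bound and annihilates $D$---which is what makes the decoupled lower bound both attainable and sufficient to pin the readout to zero. Two bookkeeping points remain: the degenerate case $K=0$ (no direction has an active neighbor), where all $b_i=0$ by the ``otherwise'' branch and $D=0$ holds a fortiori so the $1/K$ expression is never evaluated; and the fact that the argument is intrinsically an $L_1$ phenomenon, consistent with the paper's own assertion that the $q=1$ criterion is what enforces spatial sparsity.
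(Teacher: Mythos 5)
Your proof is correct and follows essentially the same route as the paper's: the identical entrywise $L_1$ decomposition of the $3\times3$ residual into the center term plus four disjoint wing pairs, the same triangle-inequality lower bound per direction, and the same observation that $\mathbf{k}=0$ attains it. The one genuine (and welcome) refinement is that you keep the $|D|$ term in the lower bound and so conclude $D=0$ at \emph{every} minimizer directly, whereas the paper discards it and must instead invoke the optimal-set uniqueness of \cref{prop:suppuniqueness}; your version also sidesteps the paper's sign typo (its displayed ``$\leq$'' should be ``$\geq$'') and correctly notes, as the paper's proof implicitly does, that the argument is specific to $q=1$.
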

\begin{proof}
    Without loss of generality, let $0=a_{11}=a_{12}=a_{13}=a_{21}$. Notice that
    \begin{equation}
    \begin{split}
        \|A-\sum_{i=1}^4k_iB_i\|_1=|a_{33}-k_1|+|k_1|+|a_{22}-k_2|\\+|k_2|
        +|a_{31}-k_3|+|k_3|+|a_{23}-k_4|\\+|k_4|+|\sum_{i=1}^4b_ik_i|
        \\\leq|a_{33}|+|a_{22}|+|a_{31}|+|a_{23}|,
    \end{split}
    \end{equation}
    where the equality is obtained when $k_i=0$. Since $D$ is unique from \cref{prop:suppuniqueness}, $D=\sum_{i=1}^4b_i\times0=0$.
\end{proof}

\subsection{Event Sampling}
From \cref{eq:processlikelihood,eq:statelikelihood}, the state and process distribution of event signal need to be independently computed from the event density flow using \cref{eq:inverserelation}, which is not easy considering the complexity of the obtained event density flow. However, it turns out that direct sampling methods exist to directly sample output events from the density flow with polarities treated independently. The result is the Ogata's modified thinning algorithm with polarities as marks in \cref{alg:sampling}.
\begin{algorithm}
    \caption{Ogata's modified thinning algorithm for sampling events from event density flow}\label{alg:sampling}
    \begin{algorithmic}[1]
    \Require Event denstiy flow $(\lambda_+(t),\lambda_-(t))$, sampling period $[t_s,t_e)$.
    \State $t_+=t_-=t_s$, $n_+=n_-=0$
    \While{$t_+<t_e$}
        \State Select $m_+(t)\geq\sup_{s\in[t_s,t_e)}\lambda_+(s)$ and step $l_+(t)>0$
        \State Generate independent random variables $s\sim\text{Exp}(m_+(t))$ and $U\sim\text{Unif}([0,1])$
        \State If $s>l_+(t)$, set $t_+=t_++l_+(t)$
        \State Else if $t_++s>t_e$ or $U>\lambda_+(t_++s)/m(t)$, set $t_+=t_++s$
        \State Otherwise, set $n_+=n_++1$, $ts_{n_+}=t+s$, $pol_{n_+}=+1$, $t_+=t_++s$
    \EndWhile
    \State Repeat the above process for $t_-,n_-$ to sample output events with $pol=-1$
    \end{algorithmic}
\end{algorithm}

Provided all the above issues are resolved, ordinary interpolation methods can be applied. For ease of selecting $m_\pm(t),l_\pm(t)$, zero-order hold is applied to interpolate the continuous event density flow, so that $m_\pm(t)$ can be selected as the height and $l_\pm(t)$ can be selected as the time span of each constant density flow piece.

\subsection{The Sequential EDFilter Algorithm}\label{sec:asyncimplementation}
To asynchronously compute the likelihood, we first prove that it's state space representation.

According to \cref{sec:temporal}, the likelihood \cref{eq:likelihood} is characterized by $\alpha$, $h$ and $\beta_\pm$ for each pixel independently. Because there is no simple solution when the number of observed events is large, look-up tables (LUTs) are instead used to find the optimal parameters. The idea is to create LUTs where likelihood values are associated with parameters as keys and then select the parameters with the maximum likelihood. Fortunately, for a fixed $\alpha$ and $\alpha h=1$ according to \cref{prop:kernelnorm1}, the likelihood can be efficiently computed with asynchronous state update equations.


For a fixed $\alpha$, denote the incoming events at this pixel $\{\ldots,(t_i,p_i),\ldots\}$ as $E(t)=\sum_i p_i\delta(t-t_i)$, then the predicted event density flow $\psi(t)=E(t)*\phi(t)$. Since this is a linear time-invariant (LTI) system, there is a state-space representation for asynchronously updating the predicted event density flow, where the minimal realization is:
\begin{align}
    \frac{d\psi(t)}{dt}=\alpha(E(t)-E(t-\frac{1}{\alpha})),
\end{align}
and the corresponding state update equations are:
\begin{gather}
    \psi(t_2)=\psi_\beta(t_1)+\alpha p_{t_2},\label{eq:async1}\\
    \psi(t)=\psi_\beta(t_1), \forall t\in[t_1,t_2),\label{eq:async2}
\end{gather}
where $t_1$ and $t_2$ are the timestamps of adjacent events in the composite pulse train $E(t)-E(t-\frac{1}{\alpha})$ and $p_{t_2}$ is the polarity of the composite event at $t_2$. Given these asynchronous state update equations, the number of events that doesn't comply with the predicted density $N_\pm(t)$ and the time span $L_\pm(t)$ can be asynchronously updated as:
\begin{gather}
\begin{split}
    N_{p_{t_2}}(t_2)&=N_{p_{t_2}}(t_1)\\
    &+\begin{cases}
        1, & \text{if $t_2$ real and $\psi(t_2)p_{t_2}\leq0$}\\
        0, & \text{otherwise}
    \end{cases},\label{eq:async3}
\end{split}\\
    N_{pol}(t)=N_{pol}(t_1),\qquad\forall t\in[t_1,t_2).\label{eq:async4.1}\\
    L_\pm(t)=L_\pm(t_1)+\begin{cases}
        t-t_1, &\text{if $\pm\psi(t_1)>0$}\\
        0. &\text{otherwise}
    \end{cases},\label{eq:async4.2}
\end{gather}

Then the likelihood can also be asynchronously computed as:
\begin{gather}\label{eq:async5}
\begin{split}
        &\mathcal{L}_s(t_2)=\mathcal{L}_s(t_1)\\
        &\times\begin{cases}
        |\psi(t_2)|, & \text{if $t_2$ real and $\psi(t_2)p_{t_2}>0$}\\
        1, & \text{otherwise}
    \end{cases}
\end{split}\\
    \label{eq:async6}
    \mathcal{L}_s(t)=\mathcal{L}_s(t_1),\qquad\forall t\in[t_1,t_2)\\\label{eq:async7}
    \mathcal{L}_p(t)=\mathcal{L}_p(t_1)e^{-(t-t_1)|\psi(t_1)|},\quad\forall t\in[t_1,t_2)\\
\label{eq:async8}
    N_\pm(t_s,t_e)=N_\pm(t_e)-N_\pm(t_s),\\\label{eq:async9}
    \beta_\pm(t_s,t_e)=\frac{N_\pm(t_e)-N_\pm(t_s)}{L_\pm(t_e)-L_\pm(t_s)},\\\label{eq:async10}
    \mathcal{L}_{process}(t_s,t_e)=\frac{\mathcal{L}_p(t_e)}{\mathcal{L}_p(t_s)}e^{-N_+(t_s,t_e)-N_-(t_s,t_e)},\\\label{eq:async11}
\begin{split}
    &\mathcal{L}_{state}(t_s,t_e)=\frac{\mathcal{L}_s(t_e)}{\mathcal{L}_s(t_s)}\\
    &\times[\beta_+(t_s,t_e)]^{N_+(t_s,t_e)}[\beta_-(t_s,t_e)]^{N_-(t_s,t_e)}
\end{split}
\end{gather}

Since we choose constant observation $W=t_e-t_s$, $\mathcal{L}_p(t_s)$ can be computed using a delayed event spike train $E(t-W)$ as input to the above state update equations, thereby achieving the asynchronous computation of $\mathcal{L}(t_s,t_e)$ with $O(1)$ complexity. As a result LUTs can be asynchronously updated and used for fast solution search. The simplest form is just selecting the LUT index with largest event observation likelihood.

The detailed process of the proposed $O$(1) Implementation of EDFilter in provided in \Cref{alg:asyncedfilter}.

\begin{algorithm}
    \caption{Sequential EDFilter Algorithm}\label{alg:asyncedfilter}
    \begin{algorithmic}[1]
    \Require $W,\gamma$, LUT keys $\{\alpha_i\}_{i=1}^N$, synchronous sampling period $dt$, synchronous sampling start $t_0$.
    \State $t_0\gets0$
    \State $\lambda_{\pm,i}(t_0,x,y)\gets0,i=1\ldots N$
    \While{Current time $t<t_0+dt$}
    \While{New event coming}
        \State $(ts,x,y,p)\gets$ new event
        \For{$\{(u,v)|\|(u,v)-(x,y)\|_\infty\leq 2\}$}
            \State Asynchronously update LUT density flow $\lambda_{\pm,i}(ts,u,v)$ according to \cref{eq:eventdensityflow,eq:async1,eq:async2}
            \State Asynchronously update LUT likelihood $\mathcal{L}_{u,v,i}(ts-W,ts)$ according to \cref{eq:likelihood,eq:async3,eq:async4.1,eq:async4.2,eq:async5,eq:async6,eq:async7,eq:async8,eq:async9,eq:async10,eq:async11}
            \State Find the best index $j=\arg\max_i \mathcal{L}_{u,v,i}(ts-W,ts)$, and obtain the best density flow estimation $\bar{\lambda}_\pm(ts,u,v)=\lambda_{\pm,j}(ts,u,v)$
        \EndFor
        \For{$\{(u,v)|\|(u,v)-(x,y)\|_\infty\leq 1\}$}
            \State Construct local density flow map around $(u,v)$ as $A_\pm(ts,u,v)=(\bar{\lambda}_\pm(ts,u+k,v+l))_{-1\leq k,l\leq1}$ 
            \State Get filtered density flow $\lambda^*_\pm(ts,u,v)=\sum_{i=1}^4k_i(ts,u,v)c_i(ts,u,v)$ according to \cref{eq:spatialbasis,eq:spatialopt,eq:spatialcontrib,eq:spatialout}
            \State Sample filtered events back using \cref{alg:sampling} with zero-order-hold to interpolate continuous event density flow $(\lambda_+^*(t,u,v),\lambda_-^*(t,u,v))$.
        \EndFor
    \EndWhile
        \State $t_0\gets t_0+dt$
        \For{All $\{(x,y)\}$ in plane}
            \State Asynchronously update LUT density flow $\lambda_{\pm,i}(t_0,x,y)$ according to \cref{eq:eventdensityflow,eq:async1,eq:async2}
            \State Asynchronously update LUT likelihood $\mathcal{L}_{x,y,i}(t_0-W,t_0)$ according to \cref{eq:likelihood,eq:async3,eq:async4.1,eq:async4.2,eq:async5,eq:async6,eq:async7,eq:async8,eq:async9,eq:async10,eq:async11}
            \State Find the best index $j=\arg\max_i \mathcal{L}_{x,y,i}(t_0-W,t_0)$, and obtain the best density flow estimation $\bar{\lambda}_\pm(ts,x,y)=\lambda_{\pm,j}(ts,x,y)$
            \State Get filtered density flow $\lambda^*_\pm(t_0,x,y)=\sum_{i=1}^4k_i(t_0,x,y)c_i(t_0,x,y)$ according to \cref{eq:spatialbasis,eq:spatialopt,eq:spatialcontrib,eq:spatialout}.
            \State Sample filtered events back using \cref{alg:sampling} with zero-order-hold to interpolate continuous event density flow $(\lambda_+^*(t,x,y),\lambda_-^*(t,x,y))$.
        \EndFor
    \EndWhile
    
    \end{algorithmic}
\end{algorithm}

\section{Experiment}
This section provides additional experimental details.

\subsection{The Motion System Setup}
The whole system is built around an accurate motion system to generate ground truth scene radiance change for fair comparison of different event signal processing methods. To simplify the physical model, we use only one motor to drive a flat disk with printed binary patterns attached.

For precise motion sensing, the motor is equipped with a two-channel 500 Pulses Per Revolution (PPR) giant magnetoresistance motor encoder and a 1/60 gearbox. The two channels have a phase shift of 90\textdegree, resulting in quadcounts enhanced resolution. After all, the expected count number per revolution is $4\times500\times60=120000$, resulting in a 0.003\textdegree angular step. However, reduction ratio is not accurately 1/60, and gear backlash exists in the motor and gearbox, so an external refinement is required.

We avoid explicitly modeling the gear backlash by maintaining the same direction of rotation during the capture since gear backlash mainly exists when the direction changes. To calibrate the gear reduction ratio, we use an aiming laser, a photoresistor and a sticky note for external calibration as shown in \cref{fig:suppcalib}. A sticky note periodically blocks the laser beam shooting at the photoresistor, which is captured by the microcontroller. The motor encoder is also connected to the microcontroller. By counting the average number of counts per rotation indicated by the photoresistor, we are able to calibrate the reduction ratio. As a result, the calibrated counts per rotation is $122727.4\pm0.1$ for our motor setup.

\begin{figure}
    \centering
    \includegraphics[width=\linewidth]{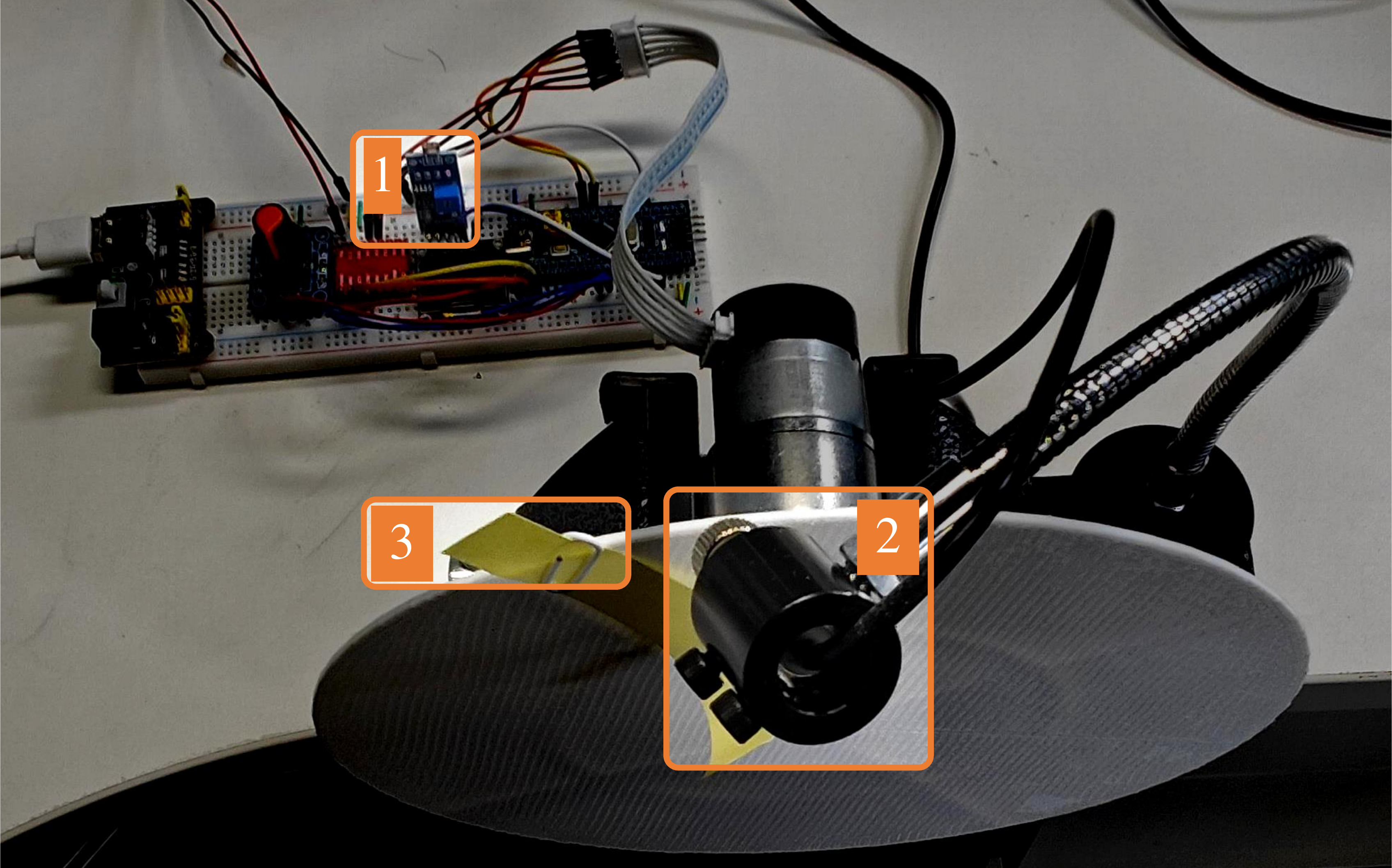}
    \caption{\textbf{Gear calibration setup.} A laser (2) is aiming at a photoresistor (1). As the disk rotates, a sticky note (3) periodically blocks the laser beam.}
    \label{fig:suppcalib}
\end{figure}

\subsection{Temporal Synchronization Setup}
Due to the microsecond-level temporal resolution of event cameras, precise temporal synchronization is not easy by only analyzing the captured events. 
Instead, we directly interact with the event timing hardware to achieve temporal synchronization.

The DAVIS346\cite{taverni2018front} and EVK4\cite{finateu20205} event cameras provide a way to synchronize external triggers with events by directly interacting the event timing hardware. Each time a rising/falling edge on the trigger port is detected, it is stamped with the internal camera clock and can be readout from the output event stream as special trigger events.

The system setup is shown in \cref{fig:suppsync}. We use the microcontroller to generate random synchronization pulses, which is captured by both the event camera through the trigger port and a 100MHz logic analyzer. The output pulses from the motor encoder is also directly attached to the logic analyzer to avoid data relaying. As a result, we have two timing regions: the logic analyzer region and event camera region. By calibrating the time shift between the two timing regions using the same synchronization signal, we are able to achieve very precise temporal synchronization. As a result, the standard deviation of temporal synchronization is $10\pm2\mu$s for EVK4 and $60\pm10\mu$s for DAVIS346. The performance difference may come from the different event timing circuit and the event timing payload. We also find that the event timing hardware requires around 100$\mu$s for a complete scan of event pixels, so we choose to evaluate the performance for every 1ms.

\begin{figure}
    \centering
    \includegraphics[width=\linewidth]{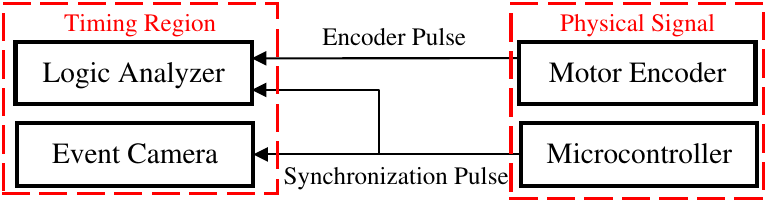}
    \caption{\textbf{Temporal synchronization setup.} Two physical signals (Encoder Pulse and Synchronization Pulse) are connected to two timing regions (Logic Analyzer and Event Camera).}
    \label{fig:suppsync}
\end{figure}

\subsection{Spatial Synchronization Setup}
In the spatial synchronization process, we aim to project all the captured signals into the same world coordinate system. The disk coordinate system is chosen for easier incorporating of physical measurements.

The problem exists as how to project the captured events onto the disk. We do this by manual calibration with calibration patterns as shown in \cref{fig:suppmarks}. Four cross marks are printed along with the texture for spatial calibration. Because the printed texture center is no the rotation center, an eccentric circle model is used for spatial synchronization. By manually labelling the cross marks on the projected event frame, we are able to decide the rotation center and shift in the eccentric circle model, thus achieving spatial synchronization. As a result, the standard deviation is around 0.1 pixel. For evaluation, all the events generated by calibration patterns are manually cropped, resulting in a clean event stream.

\begin{figure}
    \centering
    \includegraphics[width=\linewidth]{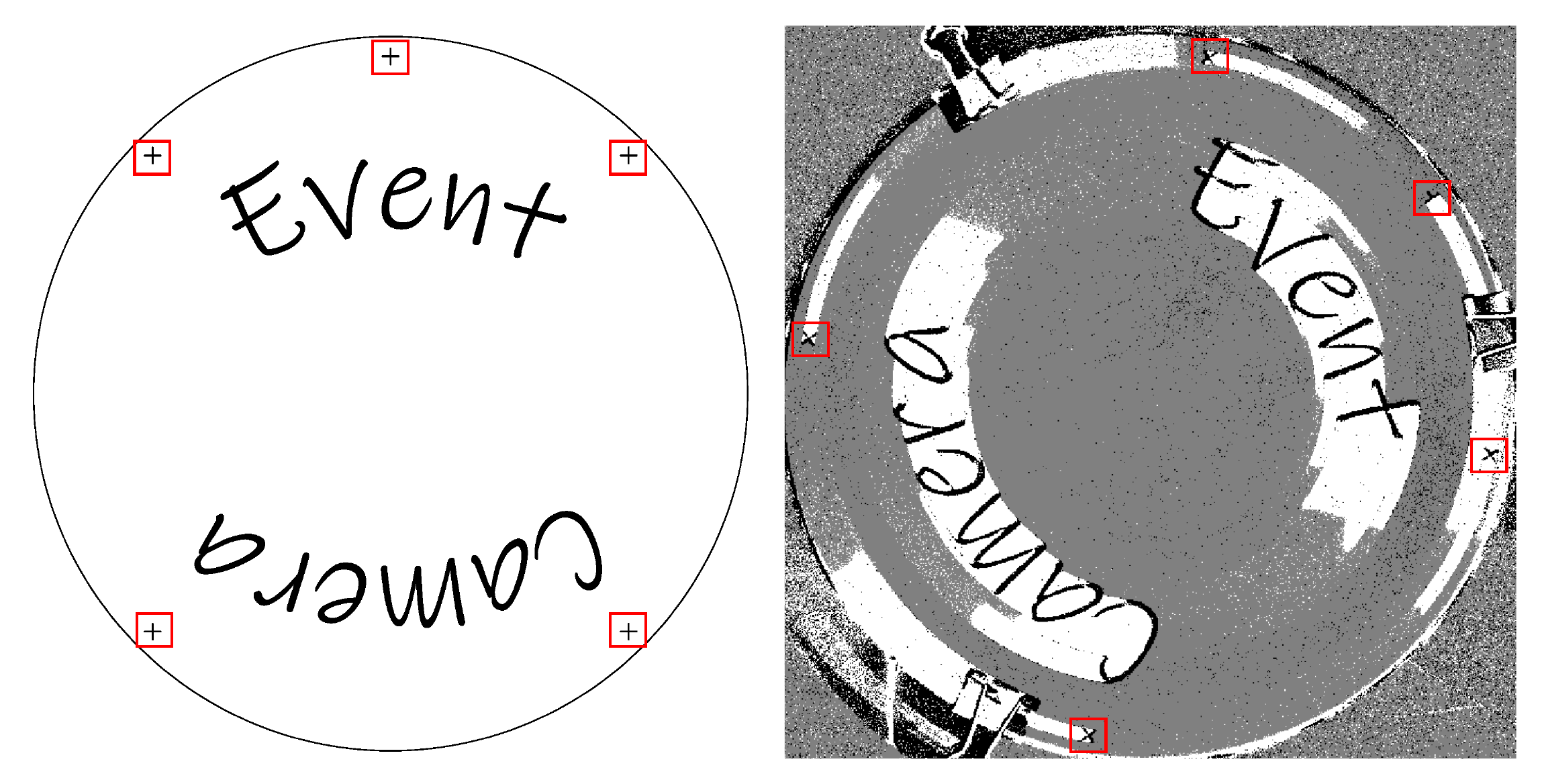}
    \caption{\textbf{Spaital synchronization setup.} Four cross marks are also printed for spatial synchronization by manual labelling on the projected event frame.}
    \label{fig:suppmarks}
\end{figure}

\subsection{Scene Radiance Change Generation}
After the temporal and spatial synchronization process, we are able to capture the precise motion in the system. The next step is to generate ground truth radiance change.

Since we use binary texture pattern under uniform illumination, the logarithm scene radiance at each pixel is either 0 or 1 (not accounting for scale). We use back projection to generate ground truth radiance change within arbitrary time range. Specifically, we are able to calculate the trajectory of each pixel in that time range. By sampling the texture intensity along the trajectory, we obtain a binary sequence with each rising edge representing the positive radiance change and falling edge representing the negative radiance change. By counting the number of rising/falling edges, we are able to obtain the ground truth scene radiance change. Using back-projection avoids computationally intensive full-image warping operations since we can select which pixel to warp and it's easier to parallelize.

\subsection{Compared Methods and Metrics}
In detail, the compared event signal filtering algorithms are:
\begin{itemize}
    \item \textbf{Raw} is the baseline method that returns raw event stream without filtering.
    \item \textbf{EvFlow \cite{wang2019ev}} is a motion-based event denoising method that passes events having reasonable optical flow, which is estimated by local plane fitting. The parameters are selected as $vmax=10p/ms$, $locality=3p\times3p\times3ms$.
    \item \textbf{Ynoise \cite{feng2020event}} is a density-based event denoising method that passes events with large event density over a spatio-temporal neighborhood. The parameters are selected as $deltaT=10ms$, $lParam=3$, $threshold=2$;
    \item \textbf{MLPF \cite{guo2022low}} is a timestamp-based event denoising method that uses a 3-layer MLP classification network to output the real probability of each event based on stacked local timesurface input. The discrimination threshold is selected as $realThres=0.5$.
    \item \textbf{EventZoom \cite{duan2021eventzoom}} is a frame-based event denoising and interpolation method that uses 3D-UNET to predict denoised and super-resolved event frames. The output events are returned from the output frame by even redistribution within the frame interval. The output threshold parameter is selected as $th=0.5$.
\end{itemize}

For evaluation on NMSE, we note that our method is generative and thus doesn't necessarily give the same result each time. To alleviate the impact of this random effect, the detailed algorithm for NMSE calculation is:
\begin{equation}
\small
    \textrm{NMSE} = \frac{\min_k \sum_{p\in\{-1,1\}}\sum_i(g_{i,p}-kd_{i,p})^2}{\sum_{p\in\{-1,1\}}\sum_ig_{i,p}^2},
\end{equation}
which contains another global scale $k$ to be optimized.

For ATE, we still sample output events and compare their tracking performance since it's already normalized. For ESR computation, we note that the metric in \cite{ding2023mlb} is based on an assumption of a fixed number of events. To alleviate the effect of random event sampling, we directly utilize the computed event rate (or EDF) to calculate ESR. The expressions are:
\begin{equation}
    NTSS=\sum_{i=1}^{W\times H}p_i^2,
\end{equation}
\begin{equation}
    L_N=W\times H-\sum_{i=1}^{W\times H}\exp(-Mp_i),
\end{equation}
\begin{equation}
    ESR=NTSS\times L_N
\end{equation}
where
\begin{equation}
    p_i=\frac{\lambda_{-,i}+\lambda_{+,i}}{\sum_{i=1}^{W\times H}(\lambda_{-,i}+\lambda_{+,i})},
\end{equation}
is the normalized EDF.

\subsection{Super Resolution}
Our method is also capable of generating super-resolved events since we are, in fact, modeling the continuous event density flow, and classical interpolation methods can be applied.

Given the $2\times2$ local density flow, we use the $2\times$ super-resolution result using bilinear kernel is:
\begin{align}\label{eq:bilinearinterop}
    \begin{bmatrix}
        a_{11} & \frac{a_{11}+a_{12}}{2} & a_{12} & a_{12}\\
        \frac{a_{11}+a_{21}}{2} & \frac{a_{11}+a_{12}+a_{21}+a_{22}}{4} & \frac{a_{12}+a_{22}}{2} & \frac{a_{12}+a_{22}}{2}\\
        a_{21} & \frac{a_{21}+a_{22}}{2} & a_{22} & a_{22}\\
        a_{21} & \frac{a_{21}+a_{22}}{2} & a_{22} & a_{22}\\
    \end{bmatrix},
\end{align}
where $(a_{ij})_{0\leq i,j\leq1}$ is the low-resolution local density flow. The above result can be regarded as the interpolation using single-sided reflected density map and can be extended to larger blocks and more complex interpolation kernels.

The results of 2$\times$ super-resolution on NMSE are shown in \cref{tab:superresolution}. The proposed method has the lowest NMSE value for most sequences, and is close to EventZoom for the remaining sequences. It's exciting to see that ours has such performance just by adapting bilinear interpolation to the asynchronous case from \cref{eq:bilinearinterop}, which shows great potential of the proposed framework. Based on the same principle, any target resolution is available by changing the interpolation kernel, making it a common adaptor design for processes that only accept certain spatial resolution or need a variable setup of computational load.

However, such design has inefficiencies when dealing with the sparse nature of events as shown in real-world sequences in \cref{fig:super-resolution}. Diffused density flow is usually expected compared with EventZoom which implicitly learns to restore scene structure from large-scale training pairs. To that end, a more detailed spatial density flow model is required for complex scenes.



\begin{table*}[]
\small
\centering
\caption{NMSE results of 2$\times$ super-resolution, lower is better. DVS/EVK columns: DAVIS346 and EVK4 captured sequences. Bold red: best value.}
\label{tab:superresolution}
\setlength{\tabcolsep}{1.2pt}
\begin{tabular}{clcclcclcclcclcclcclcclcclcc}\toprule
                                    &  & \multicolumn{2}{c}{\textbf{circle1}}                                          &  & \multicolumn{2}{c}{\textbf{circle2}}                                          &  & \multicolumn{2}{c}{\textbf{circle3}}                                          &  & \multicolumn{2}{c}{\textbf{kanizsa}}                                          &  & \multicolumn{2}{c}{\textbf{patterns}}                                         &  & \multicolumn{2}{c}{\textbf{spiral1}}                                          &  & \multicolumn{2}{c}{\textbf{spiral2}}                                          &  & \multicolumn{2}{c}{\textbf{text}}                                             &  & \multicolumn{2}{c}{\textbf{std}}                                              \\ \cline{3-4} \cline{6-7} \cline{9-10} \cline{12-13} \cline{15-16} \cline{18-19} \cline{21-22} \cline{24-25} \cline{27-28} 
\multirow{-2}{*}{\textbf{Method}} &  & DVS                                & EVK                                   &  & DVS                                & EVK                                   &  & DVS                                & EVK                                   &  & DVS                                & EVK                                   &  & DVS                                & EVK                                   &  & DVS                                & EVK                                   &  & DVS                                & EVK                                   &  & DVS                                & EVK                                   &  & DVS                                & EVK                                   \\ \hline
\textbf{EventZoom \cite{duan2021eventzoom}}             &  & 0.190                                 & 0.298                                 &  & 0.103                                 & 0.077                                 &  & 0.070                                 & {\color[HTML]{FF0000} \textbf{0.048}} &  & 0.174                                 & 0.166                                 &  & 0.121                                 & 0.102                                 &  & 0.130                                 & {\color[HTML]{FF0000} \textbf{0.061}} &  & {\color[HTML]{FF0000} \textbf{0.077}} & {\color[HTML]{FF0000} \textbf{0.075}} &  & 0.224                                 & 0.160                                 &  & 0.094                                 & 0.102                                 \\
\textbf{Ours}                  &  & {\color[HTML]{FF0000} \textbf{0.103}} & {\color[HTML]{FF0000} \textbf{0.193}} &  & {\color[HTML]{FF0000} \textbf{0.078}} & {\color[HTML]{FF0000} \textbf{0.066}} &  & {\color[HTML]{FF0000} \textbf{0.059}} & 0.056                                 &  & {\color[HTML]{FF0000} \textbf{0.076}} & {\color[HTML]{FF0000} \textbf{0.050}} &  & {\color[HTML]{FF0000} \textbf{0.087}} & {\color[HTML]{FF0000} \textbf{0.068}} &  & {\color[HTML]{FF0000} \textbf{0.119}} & 0.075                                 &  & 0.088                                 & 0.086                                 &  & {\color[HTML]{FF0000} \textbf{0.055}} & {\color[HTML]{FF0000} \textbf{0.054}} &  & {\color[HTML]{FF0000} \textbf{0.037}} & {\color[HTML]{FF0000} \textbf{0.050}}\\\bottomrule
\end{tabular}
\end{table*}

\begin{figure}
    \centering
    \includegraphics[width=\linewidth]{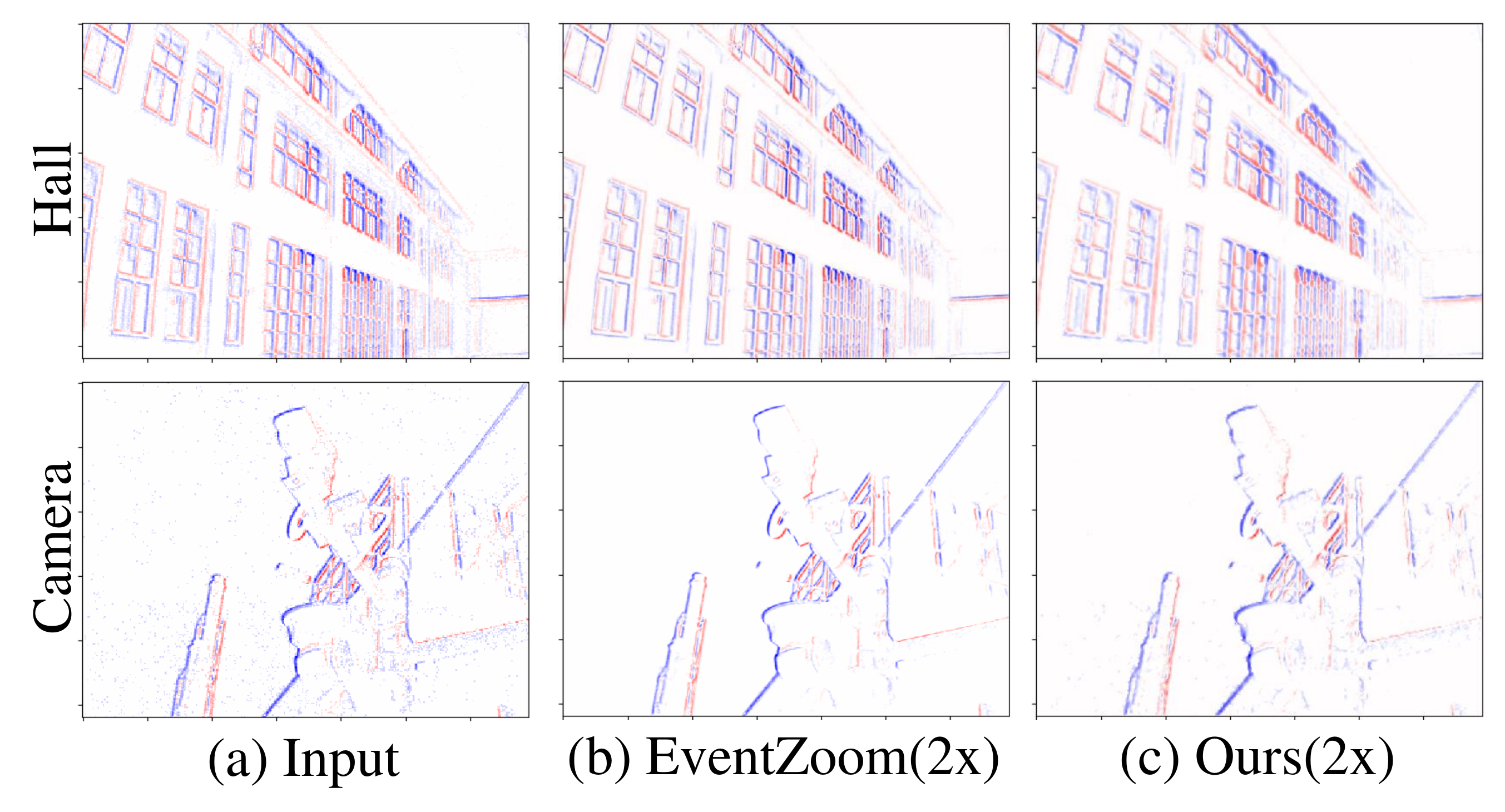}
    \caption{2$\times$ super-resolution results on the E-MLB\cite{ding2023mlb} dataset.}
    \label{fig:super-resolution}
\end{figure}

\subsection{Detailed Runtime Performance Analysis}
The proposed method is composed of three parts: the prediction part, the update part and the sampling part. Although each part is designed to be of O(1) computational complexity, the bases are not the same. Since the proposed sequential implementation in \cref{sec:asyncimplementation} enables O(1) performance by using LUT keys, the runtime performance is also dependent on it. As shown in \cref{tab:detailedperformance}, each input event will trigger 25 predictions completed in 477ns$\times$\#NKeys time and 9 updates completed in 554ns. The runtime performance of the sampling part is rather dependent on the output events, with each output event consuming about 146ns. When NKeys=9, this results in a latency of 4.99us if the number of output events is about the same as the input.

\begin{table}[]
\centering
\caption{Runtime latency performance of different parts.}
\label{tab:detailedperformance}
\begin{tabular}{ccc}
Prediction & Update & Sampling \\ \hline
477ns$\times$\#NKeys           &  554ns      &         146ns
\end{tabular}
\end{table}

\end{document}